\definecolor{DarkGreen}{rgb}{0.1,0.5,0.1}
\tikzstyle{mybox} = [draw=gray, fill=gray!20, very thick,
\pgfplotsset{compat=newest}
\tikzset{%
  >={Latex[width=1.5mm,length=1.8mm]},
  vertex/.style={draw,circle,inner sep=0mm,semithick,minimum width=4.7mm},
  bvertex/.style={draw,circle,inner sep=0mm,semithick,minimum width=5.2mm},
  textnode/.style={draw,ellipse,inner sep=0.5mm,semithick,minimum width=5.2mm},
  point/.style = {circle, draw, inner sep=0.04cm,fill,node contents={}},
  uvertex/.style={outer sep=0},
  box/.style={rectangle,minimum size=1cm},
  bidir/.style={<->,dashed,semithick,line width=0.25mm},
  dir/.style={->, line width=0.25mm},
  unobsdir/.style={->,dashed,semithick,line width=0.25mm},
  regime/.style={shape=rectangle,fill=black,inner sep=0pt,outer sep=1mm,minimum size=3pt,draw},
  node distance=1cm,
  font=\small\sffamily,
}
\definecolor{lightred}{RGB}{255, 150, 141}
\definecolor{lightblue}{RGB}{86, 193, 255}
\definecolor{lightgreen}{RGB}{115, 253, 134}
\definecolor{betterred}{RGB}{255, 10, 78}
\definecolor{betterblue}{RGB}{0, 162, 255}
\definecolor{bettergreen}{RGB}{22, 231, 207}
\definecolor{darkred}{RGB}{228,26,28}
\definecolor{darkblue}{RGB}{55,126,184}
\definecolor{darkgreen}{RGB}{77,175,74}
\definecolor{darkyellow}{RGB}{255,127,0}
\Crefname{equation}{Eq.}{Eqs.}
\Crefname{figure}{Fig.}{Figs.}
\Crefname{tabular}{Tab.}{Tabs.}
\Crefname{theorem}{Thm.}{Thms.}
\Crefname{lemma}{Lem.}{Lems.}
\Crefname{proposition}{Prop.}{Props.}
\Crefname{definition}{Def.}{Defs.}
\Crefname{algorithm}{Algorithm }{Algorithms }
\Crefname{corollary}{Corol.}{Corols.}
\Crefname{section}{Sec.}{Secs.}
\Crefname{table}{Table}{Tables}
\theoremstyle{plain}
\newtheorem{theorem}{Theorem}[section]
\theoremstyle{definition}
\newtheorem{definition}[theorem]{Definition}
\theoremstyle{remark}
\newcommand{\Braces}[1]{\left\{ #1\right\}}
\newcommand{\Parens}[1]{\left(#1\right)}
\newcommand{\angles}[2][]{#1\langle#2 #1\rangle}
\newcommand{\tuple}[2][]{\angles[#1]{#2}}
\newcommand{\xdashleftrightarrow}[2][]{\ext@arrow 3359\leftrightarrowfill@@{#1}{#2}}
\def\rightarrowfill@@{\arrowfill@@\relax\relbar\rightarrow}
\def\leftarrowfill@@{\arrowfill@@\leftarrow\relbar\relax}
\def\leftrightarrowfill@@{\arrowfill@@\leftarrow\relbar\rightarrow}
\def\arrowfill@@#1#2#3#4{%
  $\m@th\thickmuskip0mu\medmuskip\thickmuskip\thinmuskip\thickmuskip
   \relax#4#1
   \xleaders\hbox{$#4#2$}\hfill
   #3$%
}
\newcommand{\I}{\mathbb{I}}
\newcommand{\VV}{\bm{V}}
\newcommand{\XX}{\bm{X}}
\newcommand{\xx}{\bm{x}}
\newcommand{\D}{\Omega}
\newcommand{\G}{\mathcal{G}}
\newcommand{\Pa}{\mathrm{Pa}}
\newcommand{\pa}{\mathrm{pa}}
\newcommand{\PA}{\mathrm{PA}}
\newcommand{\doo}{\mathrm{do}}
\def\*#1{\boldsymbol{#1}}
\def\1#1{\mathcal{#1}}
\def\2#1{\mathcal{#1}}
\def\3#1{\mathbb{#1}}
\def\4#1{\bar{\*#1}}
\title{Partial Identification Approach to \\Counterfactual Fairness Assessment}
\author{
  Saeyoung Rho \\
  Department of Computer Science \\
  Columbia University \\
  New York, NY\\
  \texttt{s.rho@columbia.edu} \\
  \And
  Junzhe Zhang \\
  Department of Electrical Engineering and Computer Science \\
  Syracuse University\\
  Syracuse, NY\\
  \texttt{jzhan403@syr.edu} \\
  \AND
  Elias Bareinboim \\
  Department of Computer Science \\
  Columbia University \\
  New York, NY\\
  \texttt{eb@cs.columbia.edu} \\
}
\begin{document}
\maketitle

\begin{abstract}
The wide adoption of AI decision-making systems in critical domains such as criminal justice, loan approval, and hiring processes has heightened concerns about algorithmic fairness.
As we often only have access to the output of algorithms without insights into their internal mechanisms, it was natural to examine how decisions would alter when auxiliary sensitive attributes (such as race) change. This led the research community to come up with counterfactual fairness measures, but how to \emph{evaluate} the measure from available data remains a challenging task. In many practical applications, the target counterfactual measure is not identifiable, i.e., it cannot be uniquely determined from the combination of quantitative data and qualitative knowledge. This paper addresses this challenge using partial identification, which derives informative bounds over counterfactual fairness measures from observational data. We introduce a Bayesian approach to bound unknown counterfactual fairness measures with high confidence. We demonstrate our algorithm on the COMPAS dataset, examining fairness in recidivism risk scores with respect to race, age, and sex.
Our results reveal a positive (spurious) effect on the COMPAS score when changing race to African-American (from all others) and a negative (direct causal) effect when transitioning from young to old age.

\end{abstract}

\keywords{Counterfactual Fairness \and Fairness Auditing \and Fairness Measure \and COMPAS \and Algorithmic Fairness}

\section{Introduction}
\label{sec:Introduction}

Algorithmic decision-making systems have become an integral part of our lives, significantly influencing critical aspects of our society where fairness and equity are paramount. 
From determining access to healthcare resources to shaping lending practices and criminal justice outcomes, these algorithms have the ability to either uphold or compromise justice \citep{lee2019procedural}.
While we cannot halt the use of those algorithms, we have both the opportunity and the obligation to equip ourselves to examine their behaviors and evaluate how fair these algorithms are \citep{mehrabi2021survey, mitchell2021algorithmic}.

One of the primary challenges in assessing fairness is our limited understanding of the inner workings of these algorithms, as they function as opaque ``black boxes'' \citep{adebayo2016fairml,saleiro2018aequitas}. Typically, we only have access to the outputs of the algorithm, such as recommendations or decisions, while the inputs and the intricate processes concealed within these black boxes remain hidden \citep{bandy2021problematic}.
Consequently, most fairness metrics discussed so far are about retrospectively evaluating the parity of fairness of outcomes---such as False Negative Rate (FNR), False Positive Rate (FPR), and Positive Predicted Value (PPV, the positive predictions that turn out to be indeed positive)---across groups by race or gender. 
However, it is known that these parities cannot be achieved unless we assume that the true distribution also achieves parity \citep{chouldechova2017fair, berk2021fairness}. Moreover, the observational criteria often fail to capture indirect discrimination through proxies \citep{kilbertus2017avoiding}.

What if we assume access to a detailed causal model that generates the data? By manipulating the model's causal mechanisms, one could simulate thought experiments to explore hypothetical scenarios where an attribute believed by society to be irrelevant is altered. For instance, when deciding on a loan approval, one could imagine a hypothetical scenario where the applicant's race is altered: i.e., asking a counterfactual query, ``What if an individual's race were not white but black while holding all the other attributes unchanged?'' \citep{kusner2017counterfactual}.
If the algorithm's outputs change in this alternative scenario, we may conclude that the algorithm was unfair. Based on this intuition, researchers have put forth various definitions/measurements of counterfactual fairness over the past few years \citep{kusner2017counterfactual, chiappa2019path, zhang2018fairness, nabi2018fair,zhang2017anti,kilbertus2017avoiding}.
These measures take into consideration the underlying disparity in true distribution across groups if it exists, and also well-represent primary anti-discrimination frameworks applied in legal systems throughout the US and the EU: disparate impact and disparate treatment \citep{barocas2016big}.

Despite their transparency and intuitiveness, several challenges exist in evaluating these counterfactual fairness measures. In many applications, the detailed causal model is often not fully known; instead, the learner/investigator can access data generated by the algorithm through passive observation or auditing.
First of all, it necessitates the construction of a causal diagram \citep{pearl:95} encoding qualitative causal relationships among variables in the data-generating model. Learning valid causal diagrams is an active area of research studied under the rubrics of causal discovery \citep{pearl:2k, spirtes2000causation,petersen:etal06}. Even if we can access an accurate causal diagram, obtaining counterfactual probabilities based on observational data is still challenging (Section 4.4 of \cite{plecko2022causal}). The causal knowledge encoded in the diagram is often insufficient to allow target counterfactual probabilities to be identifiable, i.e., they cannot be uniquely discernible from available data \citep[Ch.~3]{pearl:2k}.
Some algorithms have been suggested to identify counterfactual fairness measures. Still, they are restricted to identifiable cases only and/or work under strong parametric assumptions such as linearity in the model \citep{nabi2018fair, wu2019counterfactual,zhang2016causal}.

Our goal in this paper is to address these challenges. We introduce a novel partial identification algorithm that could bound any counterfactual fairness measure from observational data. Our algorithm can be applied to any categorical data, without assuming strong parametric functional assumption, and derive informative bounds with theoretical guarantees. Also, we demonstrate the algorithm on the COMPAS (Correctional Offender Management Profiling for Alternative Sanctions) recidivism risk score dataset \citep{propublica}.
Through this case study, we explain the causal mechanisms behind the discrimination embedded within the COMPAS algorithm on the basis of race, sex, and age. Our research contributes to the ongoing efforts to correctly evaluate the (un)fairness of the algorithms while access to the full picture is limited. 
More specifically, our contributions are summarized as follows:
\begin{enumerate}
    \item We develop a novel partial identification algorithm for bounding counterfactual fairness measures from observational data, with a theoretical guarantee;
    \item We evaluate our algorithm using both simulations and real-world datasets.
    \item Using our algorithm, we analyze behaviors of the COMPAS algorithm and reveal new explanations for its disparate impact on defendants with minority backgrounds.
\end{enumerate}
Given space constraints, additional details of the experimental setups are provided in Appendix \ref{app:exp}.

\section{Preliminaries and Related Work}
\label{sec:Background}

In this section, we introduce the necessary definitions and theorems that we use throughout the paper. The basic notations are as follows. We use capital letters to denote variables ($\*X$), small letters for their values ($x$), and $\D_X$ for their domains. For an arbitrary set $\D$, let $|\D|$ be its cardinality. The distribution over variables $\*X$ is denoted by $P(\*X)$.
For convenience, we consistently use $P(\*x)$ as a shorthand for the probability $P(\*X = \*x)$.
Finally, the indicator function $\I_{\*X = \*x}$ returns $1$ 
if an event $\*X = \*x$ holds; otherwise, $\I_{\*X = \*x}$ is equal to $0$.

\textbf{Structural Causal Models. }The basic semantical framework of our analysis rests on \textit{structural causal models} (SCMs) \citep{pearl2009causality}. An SCM $\1M$ is a tuple $\tuple{\*V, \*U, \2F, P(\*U)}$ where $\*V$ is a set of endogenous variables and $\*U$ is a set of exogenous variables. $\2F$ is a set of functions where each $f_V \in \2F$ decides values of an endogenous variable $V \in \*V$, taking a combination of other variables in the system as an argument. That is, $v \leftarrow f_{V}(\pa_V, u_V), \*\PA_V \subseteq \*V, U_V \subseteq \*U$. Exogenous variables $U \in \*U$ are mutually independent, values of which are drawn from the exogenous distribution $P(\*U)$. Naturally, $\1M$ induces a joint distribution $P(\*V)$ over endogenous variables $\*V$, called the \emph{observational distribution}. Each SCM $\1M$ is also associated with a causal diagram $\G$ (e.g., \Cref{fig.standard}), which is a directed acyclic graph (DAG) where solid nodes represent endogenous variables $\*V$, empty nodes represent exogenous variables $\*U$, and arrows represent the arguments $\PA_V, U_V$ of each structural function $f_{V}$.

Intervention on an arbitrary subset $\XX \subseteq \VV$, denoted by $\doo(\xx)$, is an operation where values of $\XX$ are set to constants $\xx$, regardless of how they are ordinarily determined. For an SCM $\1M$, let $\1M_{\*x}$ denote a submodel of $\1M$ induced by intervention $\doo(\xx)$. For any subset $\*Y \subseteq \*V$, the \emph{potential response} $\*Y_{\*x}(\*u)$ is defined as the solution of $\*Y$ in the submodel $M_{\*x}$ given $\*U = \*u$. Drawing values of exogenous variables $\*U$ following the distribution $P(\*U)$ induces a \emph{counterfactual variable} $\*Y_{\*x}$. Specifically, the event $\*Y_{\*x} = \*y$ (for short, $\*y_{\*x}$) can be read as ``$\*Y$ would be $\*y$ had $\*X$ been $\*x$''.
For subsets $\*Y, \dots, \*Z$, $\*X, \dots, \*W \subseteq \*V$, the distribution over counterfactuals $\*Y_{\*x}, \dots, \*Z_{\*w}$ is defined as:
\begin{align}
\! P \left (\*y_{\*x}, \dots, \*z_{\*w} \right) = \int_{\D_{\*U}} \I_{\*Y_{\*x}(\*u) = \*y, \dots, \*Z_{\*w}(\*u) = \*z} dP(\*u). \label{eq:cfd}
\end{align}
Distributions of the form $P(\*Y_{\*x})$ are called \emph{interventional distributions}; when $\*X = \emptyset$, $P(\*Y)$ coincides with the \emph{observational distribution}. Throughout this paper, we assume that domains of endogenous variables $\*V$ are discrete and finite; while exogenous variables $\*U$ could take values in any (continuous) domains. For a more detailed survey on SCMs, see \citep{pearl2009causality}.

\begin{figure}[h]
    \hfill
	\begin{subfigure}{0.4\linewidth}\centering
 		\begin{tikzpicture}
			\node[vertex] (A) at (0, 0) {A};
			\node[vertex] (Y) at (3, 0) {Y};
			\node[vertex] (W) at (1.5, -1.2) {W};
			\node[vertex] (Z) at (1.5, 1.2) {Z};

			\draw[dir] (A) -- (Y);
			\draw[dir] (A) -- (W);
                \draw[dir] (W) -- (Y);
			\draw[dir] (Z) -- (Y);
			\draw[dir] (Z) -- (A);
		\end{tikzpicture}
            \caption{}
            \label{fig.standard_a}
        \end{subfigure}\hfill
	\begin{subfigure}{0.4\linewidth}\centering
 		\begin{tikzpicture}
			\def\outerr{3.2}
			\def\innerr{3}

			\node[vertex] (A) at (0, 0) {A};
			\node[vertex] (Y) at (3, 0) {Y};
			\node[vertex] (W) at (1.5, -1.2) {W};
			\node[vertex] (Z) at (1.5, 1.2) {Z};
			\node[uvertex] (U1) at (1.5, 0.4) {$U_1$};
			\node[uvertex] (U2) at (1.5, -0.4) {$U_2$};

			\draw[dir] (Z) -- (Y);
			\draw[dir] (A) -- (Y);
                \draw[dir] (W) -- (Y);
			\draw[dir] (Z) -- (A);
			\draw[dir] (A) -- (W);
			\draw[unobsdir] (U1) -- (A);
			\draw[unobsdir] (U1) -- (Z);
			\draw[unobsdir] (U1) -- (Y);
			\draw[unobsdir] (U2) -- (A);
			\draw[unobsdir] (U2) -- (W);
			\draw[unobsdir] (U2) -- (Y);
            \end{tikzpicture}  
            \caption{}
            \label{fig.standard_b}
    \end{subfigure}\hfill\null
\caption{Causal diagrams representing a standard fairness model containing a protected attribute $A$ (e.g., race), an outcome $Y$ (recidivism score), a confounder $Z$ (birthplace) and a mediator $W$ (prior criminal records).}
    \label{fig.standard}
\end{figure}
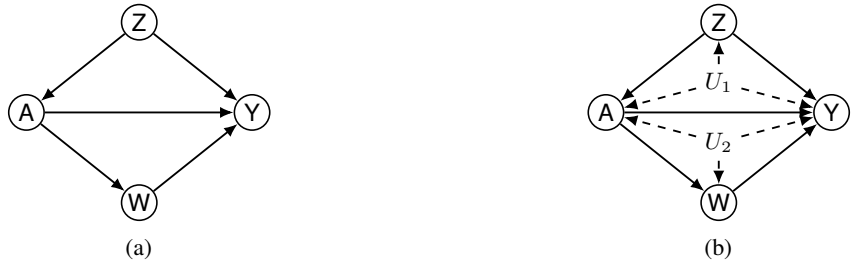

\textbf{Causal Fairness Measures. }The language of structural causality allows one to formalize and articulate concepts that are not easily defined in classic statistical theory. For instance, we can measure the impact of unfair and discriminatory practices with mathematical precision by simulating thought experiments through hypothetical interventions on structural equations. We will consistently use $A$ to stand for a protected attribute; $Y$ for the primary outcome; $Z$ for all the observed confounders affecting $A$ and $Y$, i.e., their common causes; and $W$ for the descendants of $A$ that also affect $Y$, which we call mediators.

For example, \Cref{fig.standard_a} shows a causal diagram of a standard fairness model \citep{zhang2018fairness} representing the data-generating process of some recidivism score \citep{propublica}. Here, $A$ stands for the race of the defendant; $Y$ for the predicted recidivism score; $Z$ for their birthplace; and $W$ for their prior criminal records. The judge may exhibit increased strictness towards the same action due to racial bias ($A \rightarrow W$). Also, birthplace can influence an individual's race ($Z \rightarrow A$), potentially due to historical racial segregation. Finally, all observed variables may impact the algorithmic prediction $Y$.

The counterfactual probability inspired researchers to define \emph{counterfactual fairness} measures \citep{kusner2017counterfactual}. Kusner et al. say that an algorithm is \emph{counterfactually fair} if one's prediction outcome $Y$ in the real world is similar to the $Y$ in the counterfactual world where $A$ has a different value. Let $X = Z \cup W$ be other predictors affecting outcome $Y$. Formally,
\begin{definition}[Counterfactual Effect]\label{ctf.fairness}
Under any context $X=x$ and $A=a$, we define the counterfactual effect of an intervention $A=a_0$ on $Y$, with baseline $A=a_0$, as
\begin{align}
\text{CE}_{a_0, a_1}(y|x, a) = P(y_{a_1}|x,a) - P(y_{a_0}|x, a).
\end{align}
\end{definition}
\cite{kusner2017counterfactual} declares that an algorithm is fair if $\text{CE}_{a_0, a_1}(y|x, a)=0$ for all $y$ and for any value $a_0 \neq a_1$. In the example presented in \Cref{fig.standard}, this means that the distribution of the algorithm's recidivism score prediction would not change had the race($A$) been altered, given the conditions where all other observed variables remain the same.

\citet{zhang2018fairness} suggest a set of more granular counterfactual measurements to explain observed statistical disparity in the outcome $Y$ over the protected attribute $A$ over the underlying causal pathways between them, including direct, indirect, and spurious paths.

\begin{definition}[Direct, Indirect, Spurious Effects]\label{def.ctf.family}
Given a SCM $\1M$, a counterfactual direct effect (DE), indirect effect (IE), and spurious effect (SE) of an intervention $A=a_1$ on $Y$, with baseline $A=a_0$, conditioned on $A=a$ are
\begin{align}
\text{DE}_{a_0, a_1}(y|a) &= P(y_{a_1, W_{a_0}}|a) - P(y_{a_0}|a)\\
\text{IE}_{a_0, a_1}(y|a) &= P(y_{a_0, W_{a_1}}|a) - P(y_{a_0}|a)\\
\text{SE}_{a_0, a_1}(y) &= P(y_{a_0}|a_1)- P(y_{a_0}|a_0)    
\end{align}
\end{definition}
Among quantities in the above equations, $Y_{a, W_{a'}}$ is a \textit{nested counterfactual variable} such that given a unit $\*U = \*u$, the potential outcome $Y_{a, w}(\*u)$ is equal to the solution when the input $w$ is set as the values of mediator $W_{a'}(\*u)$ under intervention $\doo(a')$. Hereinafter, we will consistently refer to distributions over counterfactual variables of the form $Y_{a, W_{a'}}$ as nested counterfactual measures. Using nested counterfactuals allows us to decompose an intervention's effect, i.e., race change, on the recidivism score into three distinct components \citep{zhang2018fairness}. First, $IE$ measures the effect caused solely by changes in prior records ($W$) induced by the intervention, achieved by altering race as an input for the function $f_{W}$.
Next, $DE$ represents the impact of an intervention directly influencing the recidivism score, removing the effect from any changes in race that may have affected prior counts. Lastly, $SE$ captures the effect mediated by changes in confounding variables, such as birthplace ($Z$). Similarly, \citet{chiappa2019path} utilized Path-Specific Effects (PSEs) \citep{pearl:01} to measure the impact of potentially discriminatory mechanisms along causal paths (i.e., one-directional paths) from the protected attribute $A$ to outcome $Y$.

The intuitive nature of counterfactual notion has inspired many other fairness definitions, counterfactual predictive parity \citep{coston2020counterfactual} and counterfactual equalized odds \citep{coston2020counterfactual,mishler2021fairness}, to name a few.
This paper will primarily focus on the counterfactual measures described in \Cref{ctf.fairness,def.ctf.family}, but our proposed algorithm applies to any counterfactual probabilities.

\section{Estimating Counterfactual Fairness Measures}
\label{sec:method}

As research progresses in defining more sophisticated counterfactual fairness measures, methods for identifying these measures have also been proposed.
\cite{nabi2018fair} and \cite{wu2019counterfactual} have introduced algorithms for measuring path-specific counterfactual effects, albeit under the assumption of a linear model. On the other hand, \cite{zhang2016causal} avoids assuming linearity but is still limited to identifiable cases only. In this section, we present an algorithm that improves the current state-of-the-art approaches by eliminating the linearity assumption and expanding the scope to include unidentifiable cases by adopting a Bayesian sampling approach.

The general procedure of our proposed approach is provided in \Cref{alg.fairness}, which is designed to estimate a given counterfactual fairness measure $\mu$ based on observational data $\*D$. It utilizes two sub-algorithms: (1) a causal discovery algorithm, called Fast Causal Inference (FCI) \citep{spirtes2001anytime}, which learns causal relationships from the data; and (2) a partial identification algorithm, SampleCTF, which samples the posterior counterfactual measure conditioning on the observational data and inferred causal relationships.

More specifically, Step 1 applies FCI to identify an equivalence class $\1E$ of candidate causal diagrams compatible with the observational data. This is a standard approach required in analyses based on structural causal models. Any causal discovery algorithm can replace FCI in this step, depending on the user's assumptions about the data and structure, such as noise distributions or parametric assumptions.
In Step 2, the algorithm refines the learned equivalence class by filtering out candidate causal diagrams violating the domain knowledge. 
In Steps 3-5, the algorithm enumerates through each candidate causal diagram $\1G_i$ and obtains posterior samples of the target counterfactual fairness measure conditioning on the data $\*D$ and causal knowledge $\1G_i$. 
We will discuss details of the sampling algorithm, SampleCTF, later in this section. Finally, the algorithm sorts posterior samples and returns an interval at the $(1 - \delta)\%$ confidence level for a fixed error rate $\delta \in [0, 1)$. The learner could decide the error rate $\delta$ based on the goal of the analysis. When $\delta=  0$, the return interval converges to the optimal bound guaranteed to contain the target counterfactual measure when the number of observed samples increases \citep{manski1990nonparametric,chickering1996clinician,zhang2022partial}.

\begin{algorithm}[t]
  \caption{\textsc{Identifying Fairness Measure}: IDFair($\*D, \mu, \delta$)}
  \label{alg.fairness}
\begin{algorithmic}[1]
 \REQUIRE $\*D = \left\{(V^t): t = 1, \dots, T \right \}$, fairness measure $\mu$, and error rate $\delta$
 \ENSURE A bound containing the fairness measure $\mu \in [a, b]$
 \STATE Learn an equivalence class $\1E =\text{FCI}(\*D)$ which is a finite set of causal diagrams $\1E = \Braces{\1G_1, \dots, \1G_n}$ compatible with the observational data $\*D$
 \STATE Construct a subset of causal diagram $\1E^* = \Braces{\mathcal{G}_1, \ldots, \mathcal{G}_m}$ from the equivalence class $\1E$ using the domain knowledge.
 \FOR{$\mathcal{G}_i \in \1E^*$}
    \STATE $\mu_i$ = \text{SampleCTF}($\*D, \mathcal{G}_i, \mu, N$) \#$N$ samples of fairness measure
 \ENDFOR
 \STATE $\Gamma = [\mu_{1,1}, \ldots, \mu_{1,N}, \ldots, \mu_{n,N}]$ \#concatenated $n\times N$ samples
 \STATE $\Gamma = sort(\Gamma)$ \#sorted in an increasing order
 \RETURN mean $\frac{\sum_{j=1}^{n \times N} \Gamma_{j}}{n \times N}$, $(1 - \delta)\%$ confidence interval $\left(\Gamma_{\lfloor{\delta / 2 \times n\times N}\rfloor}, \Gamma_{\lceil{(1 - \delta/2)\times n\times N}\rceil}\right)$
\end{algorithmic}
\end{algorithm}

\subsection{Partial Counterfactual Identification}\label{sec:hyperparam}
In this section, we introduce a subroutine to draw posterior samples of unknown counterfactual probabilities conditioning on the observed data and causal knowledge of the environment. Details of this subroutine, SampleCTF, are described in Algorithm \ref{alg.gibbs}. It is a Bayesian sampling algorithm that takes three elements as inputs: dataset containing \(T\) samples of \(\*V\), \(\*D = \left\{(\*V^t): t = 1, \dots, T \right \}\), where \(V\) is a set of observed variables, a causal graphical model $\mathcal{G}$, and a target fairness measure $\mu$.
The causal graphical model \(\mathcal{G}\) provides quantitative information about a set of unobserved variables \(U\) and the structure of \(f_V\), a function determining an observed variable \(v \in V\). Although the specifics of \(f_V\) are not specified, the graph $\G$ specifies which variables are eligible to be input for that function.
The fairness measure $\mu$ is defined by the probability over unobserved $\*U$, and Algorithm \ref{alg.gibbs} is to estimate the exogenous distribution $P(\*U)$ and how it is pushed down to each element through functions $f_V$. These components serve as the foundational input for the algorithm.

\begin{algorithm}[t]
  \caption{\textsc{Sampling Counterfactual Probabilities}: SampleCTF($\*D, \mathcal{G}, \mu $)}
  \label{alg.gibbs}
\begin{algorithmic}[1]
 \REQUIRE $\*D = \left\{(V^t): t = 1, \dots, T \right \}$, $\mathcal{G}$, $\mu$,  $N$; $\alpha$, $M$,$K$
 \STATE Initialize $\mathbf{q}^0 = \frac{1}{K} \cdot (1, \cdots , 1)$ to be a uniform distribution with $K$ cases
 \STATE Initialize $f_V$ as a random function for all $V \in \*V$.
  \FOR {$t = 1, \dots, T$}
  \STATE Randomly assign $U^{t}$
 \STATE Update $f_V$ to match $V^t$ and $U^t$ for all $V \in \*V$
 \ENDFOR
 \COMMENT{ \# Initialization done, sampling starts}
 \FOR {$i = 1, \dots, M+N$}
 \FOR {$t = 1, \dots, T$}
 \STATE Compute $P(U | V = V^t; \mathbf{q}^{i-1}) \propto P(V^t | U) \mathbf{q}^{i-1}$.
 \STATE Draw $U^t \sim P(U | V = V^t; \mathbf{q}^{i-1})$.
 \ENDFOR
 \STATE Initialize Dirichlet Prior $\theta = \alpha \cdot (1, \cdots , 1)$ for $\mathbf{q}$
 \STATE Initialize $f_V$ as a random function for all $V \in \*V$
 \FOR {$t = 1, \dots, T$}
 \STATE Update $f_V$ to match $V^t$ and $U^t$ for all $V \in \*V$
 \STATE Update $\theta$ as the sum of the occurrence of $U$
 \ENDFOR
 \STATE Draw $\mathbf{q}^i \sim \mbox{Dirichlet}\left(\theta\right)$
 \IF {i > M}
 \STATE $\mu_i = \mu(\mathbf{q}^{i}; f_V)$
 \ENDIF
 \ENDFOR
 \RETURN Sampled counterfactual measure $[\mu_{M+1}, \ldots, \mu_{M+N}]$
\end{algorithmic}
\end{algorithm}

In addition to the aforementioned core inputs, several parameters need to be specified. For simplicity and without loss of generality, we assume the existence of only one unobserved variable \(U\) with a finite cardinality of \(K\) in this section. In practical scenarios, there may be multiple unobserved variables \(U_1, U_2, \ldots\), each with corresponding cardinalities \(K_1, K_2, \ldots\) (or one can think of $\*U$ as a vector). Exogenous probabilities $P(u)$ are a probability vector drawn from a Dirichlet distribution, i.e., $P(u) \sim \mbox{Dirichlet}\left (\alpha, \dots, \alpha \right)$, where \(\alpha > 0\) is a small constant chosen for the Dirichlet prior. 

A natural question arising at this point is how to determine the cardinality $K$ for the domain of any exogenous variable $U \in \*U$. To answer this question, we first introduce some necessary concepts in causal inference. We will utilize a special type of clustering of endogenous variables in the causal diagram, which is called \emph{confounded components} \citep{tian:pea02}. For convenience, let a \emph{bi-directed arrow} $V_i \leftrightarrow V_j$ between endogenous nodes $V_i, V_j \in \*V$ be defined as a sequence $V_i \leftarrow U_k \rightarrow V_k$ where $U_k \in \*U$ is an exogenous parent shared by $V_i, V_j$. A \emph{bi-directed path} is a consecutive sequence of bi-directed arrows, ---i.e., a path composed entirely of bi-directed arrows.
Formally,
\begin{definition}[C-Component \citep{tian:pea02}]\label{def:c-component}
	For a causal diagram $\G$, a subset of variables $\*C \subseteq \*V$ is a c-component if any pair of nodes $V_i, V_j \in \*C$ is connected by a bi-directed path.
\end{definition}
For an arbitrary exogenous variable $U \in \*U$, we denote by $\*C(U)$ the c-component covering $U$ in $\G$, i.e., $U \in \bigcup_{V \in \*C(U)} U_V$. For instance, every node in \Cref{fig.standard_a} is a c-component due to the lack of bidirected arrows. On the other hand, exogenous variables $U_1, U_2$ in \Cref{fig.standard_b} are covered by a single c-component $\*C(U_1) = \*C(U_2) = \{A, Z, W, Y\}$ due to the bi-directed path $A \leftrightarrow Z \leftrightarrow Y \leftrightarrow W$.

To compute the cardinality $K$ for unobserved variables $U$, the C-component of $U$ is first identified, and the number of states required to represent all states of the observed variables is counted. More specifically, we will model every unobserved exogenous variable $U_i \in \*U$ as a discrete variable taking values in a finite domain $\{1, \dots, K_i\}$. For every $U_i \in \*U$, the cardinality $K_i$ of the exogenous domain of $U_i$ is bounded by
\begin{align}
    K_i = d_i + 1, \;\;\; \text{ where }d_i= \Pi_{V\in \Pa(\*C(U_i))} |V| .\label{eq:u_bound}
\end{align}
In the above equation, $\*C(U_i)$ is the c-component covering $U_i$ in the causal diagram of the model $\1M$; $\Pa(\*C(U_i))$ are observed direct parents of nodes in the c-component $\*C(U_i)$ (including $\*C(U_i)$). For example, suppose our goal is to infer the counterfactual direct effect $\text{DE}_{a_0, a_1}(y|a)$ from the observational distribution $P(A, Z, W, Y)$ in \Cref{fig.standard_b}; $A, Z, W, Y$ are binary variables taking values in $\{0, 1\}$. Since $U_1, U_2$ share the same c-component $\{A, Z, W, Y\}$, SampleCTF will set their cardinality $K_1, K_2$ are set as $K_1 = K_2 = 2^4 + 1 = 17$.

Since we do not assume any specific forms of the exogenous domains and the exogenous variables $\*U$ could take any values, one may wonder whether bounding the exogenous cardinality following Eq.~\ref{eq:u_bound} could impose additional restrictions, rendering the inferred posterior counterfactual probabilities invalid. We will next show this is not the case. First, Proposition 2.6 in \cite{zhang2022partial} for any structural causal model $\1M$ with discrete observed variables $\*V$, one could generate its observational distribution $P(\*V)$ using discrete latent variables $\*U$, where for every $U_i \in \*U$, its cardinality is bounded by $d_i$ defined in Eq.~\ref{eq:u_bound}. We could extend this discretization result to represent the observational data $P(\*V)$ and the target counterfactual measure $\mu$ simultaneously. The additional $+1$ state in Eq.~\ref{eq:u_bound} allows us to represent the probability mass associated with the target counterfactual measurement $\mu$. The following proposition ensures that this exogenous cardinality bound is sufficient in representing the observed data $P(\*V)$ and the target fairness measure $\mu$, without violating qualitative knowledge encoded in graph $\G$.
\begin{theorem}\label{thm:bound}
    For an SCM $\1M$, let $\1G$ be its associated causal graph, $P(\*V)$ be its observational distribution, and $\mu$ be a nested counterfactual measure. Then there exists an alternative SCM $\1N$ with exogenous cardinalities bounded in Eq.~\ref{eq:u_bound} such that $\1M$ and $\1N$ induce the same $\1G$, $P(\*V)$ and $\mu$.
\end{theorem}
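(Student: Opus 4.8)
The plan is to build $\mathcal{N}$ from $\mathcal{M}$ by re‑randomizing the exogenous variables one confounded component at a time, bounding the number of states needed per component with Carath\'eodory's theorem. Let $\mathbf{C}_1,\dots,\mathbf{C}_\ell$ be the c‑components of $\mathcal{G}$, write $\mathbf{Z}_j$ for the observed parents of $\mathbf{C}_j$ lying outside $\mathbf{C}_j$, and set $U_j:=\bigcup_{V\in\mathbf{C}_j}U_V$, so that $P(\mathbf{U})=\prod_j P(U_j)$ and $d_j=|\mathcal{D}_{\mathbf{C}_j}|\cdot|\mathcal{D}_{\mathbf{Z}_j}|$ (with $|\mathcal{D}_{\mathbf{Z}_j}|=1$ when $\mathbf{Z}_j=\emptyset$). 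A value $u_j$ of $U_j$, together with the structural functions $\{f_V\}_{V\in\mathbf{C}_j}$, determines a \emph{response type}: for each of the finitely many intervention regimes that occur when $\mu$ is written out (for the measures of \Cref{ctf.fairness} and \Cref{def.ctf.family} these regimes differ only in how $\mathbf{C}_j\cap\{A,W\}$ is held fixed) and each configuration of $\mathbf{Z}_j$, solving the equations of $\mathbf{C}_j$ in topological order fixes the values of all of $\mathbf{C}_j$. There are finitely many response types $r$; let $w_r$ denote the probability, under $P(U_j)$, that $U_j$ realizes type $r$. By the c‑component factorization of observational and counterfactual distributions \citep{tian:pea02}, the observational c‑factor $Q[\mathbf{C}_j]$ and every counterfactual joint probability entering $\mu$ are products, over $j$, of quantities each \emph{linear} in $(w_r)_r$ when the other components' laws are held fixed. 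It therefore suffices to prove the following and apply it to $j=1,\dots,\ell$ in turn, freezing already‑processed components at their new definitions and the rest at their original ones: holding all other components fixed, $U_j$ can be replaced by a discrete variable $\tilde U_j$ with $|\mathcal{D}_{\tilde U_j}|\le d_j+1$ and structural functions of the same argument sets, so that $Q[\mathbf{C}_j]$ and $\mu$ are preserved.

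For the per‑component step, fix $\mathbf{C}_j$ with all other components frozen, and consider the affine map $\Phi$ sending a probability vector $w=(w_r)_r$ over response types to the pair $\big(Q[\mathbf{C}_j],\,g(w)\big)$, where the first block records the $|\mathcal{D}_{\mathbf{Z}_j}|\,(|\mathcal{D}_{\mathbf{C}_j}|-1)$ free entries of the observational c‑factor and $g$ is the single affine functional obtained by expanding $\mu$ (a counterfactual probability, or a fixed difference of such sharing one conditioning event) and clearing its conditioning probability---which is observational and hence already pinned once $Q[\mathbf{C}_j]$, and therefore $P(\mathbf{V})$, is preserved; thus preserving $\big(Q[\mathbf{C}_j],g\big)$ preserves $\mu$. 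The target $\Phi(w)$ lies in $\mathbb{R}^m$ with $m=|\mathcal{D}_{\mathbf{Z}_j}|\,(|\mathcal{D}_{\mathbf{C}_j}|-1)+1$, and since $\Phi(w)=\sum_r w_r\,\Phi(\mathbf{e}_r)$ is a convex combination of the finitely many vertex images, Carath\'eodory's theorem yields $w'$ supported on at most $m+1$ response types with $\Phi(w')=\Phi(w)$. Since $|\mathcal{D}_{\mathbf{Z}_j}|\ge 1$ we have $m+1=|\mathcal{D}_{\mathbf{Z}_j}|\,(|\mathcal{D}_{\mathbf{C}_j}|-1)+2\le|\mathcal{D}_{\mathbf{Z}_j}|\,|\mathcal{D}_{\mathbf{C}_j}|+1=d_j+1$; letting $\tilde U_j$ index these types with masses $w'$ and each $\tilde f_V$ ($V\in\mathbf{C}_j$) return the value prescribed by the indexed type gives the required fragment. (Dropping $g$ from $\Phi$ recovers the observational bound $d_j$ of Proposition~2.6 in \citet{zhang2022partial}, and the extra coordinate for $g$ is precisely the ``$+1$''.)

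Concatenating the per‑component replacements yields $\mathcal{N}$: every structural function keeps its argument set, so $\mathcal{N}$ induces the same $\mathcal{G}$; each $Q[\mathbf{C}_j]$, hence $P(\mathbf{V})=\prod_j Q[\mathbf{C}_j]$, is preserved; and $\mu$ is preserved at every step (the value frozen in is the true $\mu$), so it equals $\mu(\mathcal{M})$ at the end---with all exogenous cardinalities at most $d_j+1$ as in \Cref{eq:u_bound}. I expect the main obstacle to be the first paragraph: making the response‑type bookkeeping and the c‑component factorization rigorous for \emph{nested} (multi‑world) counterfactuals---concretely, verifying that the parallel‑worlds graph underlying $\mu$ has confounded components that are merely replicated copies of those of $\mathcal{G}$, so that each counterfactual joint probability in $\mu$ really does factor into per‑c‑component pieces and re‑randomizing each $U_j$ independently neither creates nor destroys cross‑component dependence. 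A secondary point is confirming that after clearing the conditioning event, $\mu$ reduces to preserving a \emph{single} extra scalar $g$ rather than several; this is what makes the count in the second paragraph close at exactly $d_j+1$, and it relies on the measures of \Cref{ctf.fairness,def.ctf.family} being (differences of) conditional probabilities that share the same observational conditioning event.
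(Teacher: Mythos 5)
Your proposal is correct and takes essentially the same route as the paper: the paper's own proof is a two-line appeal to Proposition~2.6 and Lemma~A.6 of \citet{zhang2022partial}, whose underlying argument is precisely the per-c-component response-type discretization with a Carath\'eodory support reduction, augmented by one extra affine coordinate to pin the value of $\mu$ --- exactly what you spell out, including the count $|\mathcal{D}_{\mathbf{Z}_j}|(|\mathcal{D}_{\mathbf{C}_j}|-1)+2\le d_j+1$ that yields the ``$+1$'' in Eq.~\ref{eq:u_bound}. The two obstacles you flag (the ctf-factorization of nested counterfactual joints over the c-components of $\1G$, and the reduction of $\mu$ to a single affine functional once $P(\*V)$ is fixed) are the same points the cited lemma handles, so your writeup is in effect a self-contained version of the argument the paper delegates to the reference.
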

\begin{proof}
        Proposition 2.6 in \cite{zhang2022partial} for any structural causal model $M$ with discrete observed variables $\*V$, one could generate its observational distribution $P(\*V)$ using discrete latent variables $\*U$, where for every $U_i \in \*U$, its cardinality is bounded by $d_i$ defined in Eq.~\ref{eq:u_bound}. The additional $+1$ state in Eq.~\ref{eq:u_bound} allows one to represent the probability mass associated with the target counterfactual measure $\mu$. The proof follows a similar procedure for \cite[Lemma A.6]{zhang2022partial}.  
\end{proof}
After initializing exogenous cardinalities, \Cref{alg.gibbs} begins by initializing the distribution of unobserved variables \(P_0(U)\) and the functions \(f_V\) (see \Cref{alg.gibbs}, Steps 1 to 6). It chooses a uniform distribution over $U$, i.e., \(P_0(U)=\mathbf{q}^0 = \frac{1}{K} \cdot (1, \cdots , 1)\) (Step 1). The initialization of \(f_V\) involves randomly defining the function (Step 2) and drawing \(U^t\) based on the uniform distribution \(P_0(U)\) (Step 4). The structural function \(f_V\) is then updated to match \(V^t\) and \(U^t\) by iterating over the data one by one. The order of the samples can be randomized if desired, as the goal is to have random functions that align well with the data.

In each round \(i\) of the sampling procedure (\Cref{alg.gibbs}, Steps 8 to 23), new \(U^t\) is drawn from \(P(U | V = V^t; \mathbf{q}^{i-1})\), and \(f_V\) is re-initialized accordingly. While we update the $f_V$, a Dirichlet prior $\theta$ is initialized to $\theta = \alpha \cdot (1, \cdots , 1)$ and updated to obtain a Dirichlet posterior based on the occurrence of \(U\). Finally, an updated distribution $\mathbf{q}^{i}$ over $U$ is drawn from the Dirichlet posterior: \(\mathbf{q}^{i} \sim \mbox{Dirichlet}\left(\theta\right)\). This process is iterated for $M$ rounds to converge to a stable period, and the last $N$ rounds are used for generating counterfactual probabilities in Step 20, computed using $\mathbf{q}^{i}$ and $f_V$. In summary, \Cref{alg.gibbs} samples counterfactual probabilities by exploring the entire function space while updating the conditional probability of unobserved variables given observations.
\begin{theorem}\label{thm:sample}
    Given priors $\*\rho$ over the exogenous probabilities $P(\*U)$ and structural functions $\1F$, SampleCTF$(\*D, \1G, \*\mu)$ draws a posterior sample of the target counterfactual measure $\*\mu$ conditioning on the observed data $\*D$, i.e., $\mu \sim P(\mu \mid \*D; \1G, \*\rho)$.
\end{theorem}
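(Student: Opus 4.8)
The plan is to recognise \textsc{SampleCTF} as a two-block data-augmentation Gibbs sampler whose stationary distribution is the posterior over the (bounded-cardinality) structural causal model given the data, and then to read off the claim about $\mu$ by pushing that posterior through the measurable map $(\mathbf q,f)\mapsto \mu(\mathbf q;f)$. Concretely: (i) write down the Bayesian model that the priors $\*\rho$ and the graph $\1G$ induce; (ii) check that one sweep of the inner loop of \Cref{alg.gibbs} realises exactly the two full conditionals of that model's posterior; (iii) invoke standard Markov-chain Monte Carlo theory to conclude that the chain converges to the posterior, so that the samples returned after the $M$-step burn-in are (asymptotically) distributed as $P(\mu\mid\*D;\1G,\*\rho)$.

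For step (i), by \Cref{thm:bound} it is without loss of generality to take every exogenous domain to have the cardinality fixed in \eqref{eq:u_bound}, so the unknowns are the pair $(\mathbf q,f)$: the exogenous law $\mathbf q=P(\*U)$ (a draw from the independent Dirichlet priors in $\*\rho$) together with the tuple of structural functions $f=\{f_V\}_{V\in\*V}$ (a draw from the function prior $\1F$ in $\*\rho$, which I will take to factor into independent priors, one per argument tuple $(\pa_V,u_V)$, e.g.\ uniform over $\D_V$). Given $(\mathbf q,f)$, the rows of $\*D$ are i.i.d.: for each $t$ draw $U^t\sim\mathbf q$ and let $V^t$ be the deterministic solution of the submodel with functions $f$ at $U^t$, so that $P(V^t\mid U^t)$ is the $\{0,1\}$-valued indicator that solving $f$ at $U^t$ yields $V^t$. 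This defines a joint law $P(\mathbf q,f,\{U^t\},\{V^t\})$; conditioning on $\{V^t\}=\*D$ gives the posterior, whose image under $(\mathbf q,f)\mapsto\mu(\mathbf q;f)$ is by definition $P(\mu\mid\*D;\1G,\*\rho)$. (The role of the extra ``$+1$'' state in \eqref{eq:u_bound} is, as in \Cref{thm:bound}, to leave room for the nested-counterfactual mass appearing in $\mu$, so this posterior is genuinely over models that reproduce both $P(\*V)$ and $\mu$.)

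For step (ii), take the chain state to be $(\mathbf q,f,\{U^t\})$ with target $P(\mathbf q,f,\{U^t\}\mid\{V^t\}=\*D)$, and match the two blocks. The augmentation block (redrawing $U^t\sim P(U\mid V=V^t;\mathbf q^{i-1})$) is correct because, conditionally on $(\mathbf q,f)$ and the data, the $U^t$ are independent across $t$ with $P(U^t=u\mid V^t)\propto P(V^t\mid u)\,\mathbf q(u)$ --- exactly the formula used by the algorithm. The parameter block is correct because, conditionally on $\{U^t\}$ and $\*D$, Dirichlet--multinomial conjugacy gives $\mathbf q\mid\{U^t\}\sim\mathrm{Dirichlet}(\theta)$ with $\theta_u=\alpha+\#\{t:U^t=u\}$ (independently of $f$, and of $\*D$ given $\{U^t\}$), while the product form of $\1F$ makes $f\mid\{U^t\},\*D$ the law that pins $f_V$ to the forced value on every argument tuple realised by some $(V^t,U^t)$ and re-samples $f_V$ from its prior on all other tuples --- exactly the ``re-initialise $f_V$ randomly, then overwrite it to match the data'' step. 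Here one also verifies the invariant, established by the initialisation (\Cref{alg.gibbs}, Steps~1--6) and preserved by every iteration, that the current $f$ is consistent with all pairs $(V^t,U^t)$; this makes the normalising constants in the augmentation step strictly positive and the pinned values in the parameter block mutually consistent, so both conditionals are well defined.

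For step (iii), a systematic-scan Gibbs sampler leaves its target invariant; it remains to argue aperiodicity and irreducibility on the support of the posterior. Aperiodicity is immediate. For irreducibility the point is that $\mathbf q$ is an almost-surely fully supported Dirichlet draw and, under the product prior $\1F$, $f$ can be redrawn to any data-consistent function, so from any state one can reach with positive probability a ``generic'' $f$ under which every endogenous value needed by the data has several exogenous preimages; from such an $f$ the augmentation step has full freedom, and by routing through such generic intermediates any two posterior-support configurations can be connected. Given ergodicity, the law of $(\mathbf q^i,f)$ converges to the posterior, hence so does that of $\mu_i=\mu(\mathbf q^i;f)$, which is the assertion (the $M$-round burn-in in \Cref{alg.gibbs} being precisely this convergence phase). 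I expect this irreducibility argument to be the main obstacle: since the set of exogenous assignments the augmentation step can reach is itself determined by the current $f$, one must be careful about how the coupled (augment, redraw-$f$) moves explore the space and about making the informal ``generic $f$'' argument precise; by contrast the conjugacy and conditional-independence identities in step (ii) are routine, and step (iii)'s invariance claim is standard.
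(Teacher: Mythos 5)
Your proposal is correct and follows essentially the same route as the paper: reduce to the bounded-cardinality exogenous parametrization via \Cref{thm:bound}, then view \textsc{SampleCTF} as a data-augmentation Gibbs sampler whose stationary law is the posterior over $(\mathbf{q}, f)$ induced by the priors $\*\rho$, pushed forward through the map $(\mathbf{q},f)\mapsto\mu(\mathbf{q};f)$. The paper's own proof is just a two-line appeal to \Cref{thm:bound} and ``the validity of Gibbs sampling,'' so your explicit verification of the two full conditionals and your flagging of irreducibility as the delicate point supply rigor the paper omits rather than diverging from its argument.
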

\begin{proof}
    It follows from Theorem \ref{thm:bound} that discrete exogenous domains with cardinality bounds in Eq.~\ref{eq:u_bound} are sufficient in representing the observational distribution $P(\*V)$ and an arbitrary nested counterfactual measure $\mu$ in a casual graphical model $\G$. Given priors $\*\rho$ of the exogenous probabilities $P(\*u)$ and structural functions $\1F$, the statement follows from the validity of Gibbs sampling \citep{geman1984stochastic}.
\end{proof}
Choosing the appropriate burn-in period $M$ involves recognizing that Algorithm \ref{alg.gibbs} operates as a special type of Gibbs sampling method. Consequently, we anticipate the sampled $P(U)$ to converge to a certain distribution. As the true $P(U)$ is inaccessible, we suggest computing a probability obtainable from the observed distribution and verifying that the sampled quantity indeed converges to the observed quantity. For instance, in \Cref{sec:compas} that follows, we calculate $P\Parens{Y=1 \mid A=0}$ from the sampled $P(U)$ and the functions, comparing it against the same quantity computed from the observed data (assumed to be the ground truth).

\section{Simulation study}
\label{sec:simulation}
In this section, we evaluate the accuracy of our method using a simulation dataset. The data was generated based on the graphical model in \Cref{fig.standard_a}, with a random distribution for each unobserved variable. The final dataset consists of 50,000 observations for the binary variables $Z$, $A$, $W$, and $Y$.

By applying the FCI algorithm \citep{spirtes2001anytime}, one could infer that the covariate $Z$ and the mediator $W$ are not adjacent. That is, there is no direct $Z \rightarrow W$, $Z \leftarrow W$, or bidirected arrow $Z \leftrightarrow W$ in the underlying causal diagram. We also assume access to the domain knowledge that variables $A, Z, W$ could be potential direct causes for the outcome $Y$, but not vice versa. Synthesizing the domain knowledge with the causal relationships inferred from the observational data leads to a causal diagram $\1G$ described in \Cref{fig.standard_b}. Compared with the ground-truth graph in \Cref{fig.standard_a}, our learned model $\1G$ has additional latent variables $U_1, U_2$ affecting variable clusters $\{A, Z, Y\}$ and $\{A, W, Y\}$ respectively, since the combination of the observational data and domain knowledge is unable to rule out the presence of unobserved confounding.

With this $\mathcal{G}$, the cardinality of unobserved variables $U_1$ and $U_2$ should be greater than or equal to $17$. We set $K=22$ and apply \Cref{alg.fairness} to obtain bounds for the direct effect (DE), indirect effect (IE), and total effect (TE) of changing $A$.
\Cref{fig.simu.hist} shows the histograms of the DE, IE, and SE drawn by \Cref{alg.gibbs}.
The results consistently contained the ground truth within the $(1 - \delta)\%$ confidence interval for all three quantities with $\delta = 0.05$, corroborating Theorems \ref{thm:bound} and \ref{thm:sample}. 
More detailed results are shown in Appendix \ref{app.simu}.

\begin{figure*}[t]
    \centering
    \includegraphics[width=0.33\linewidth]{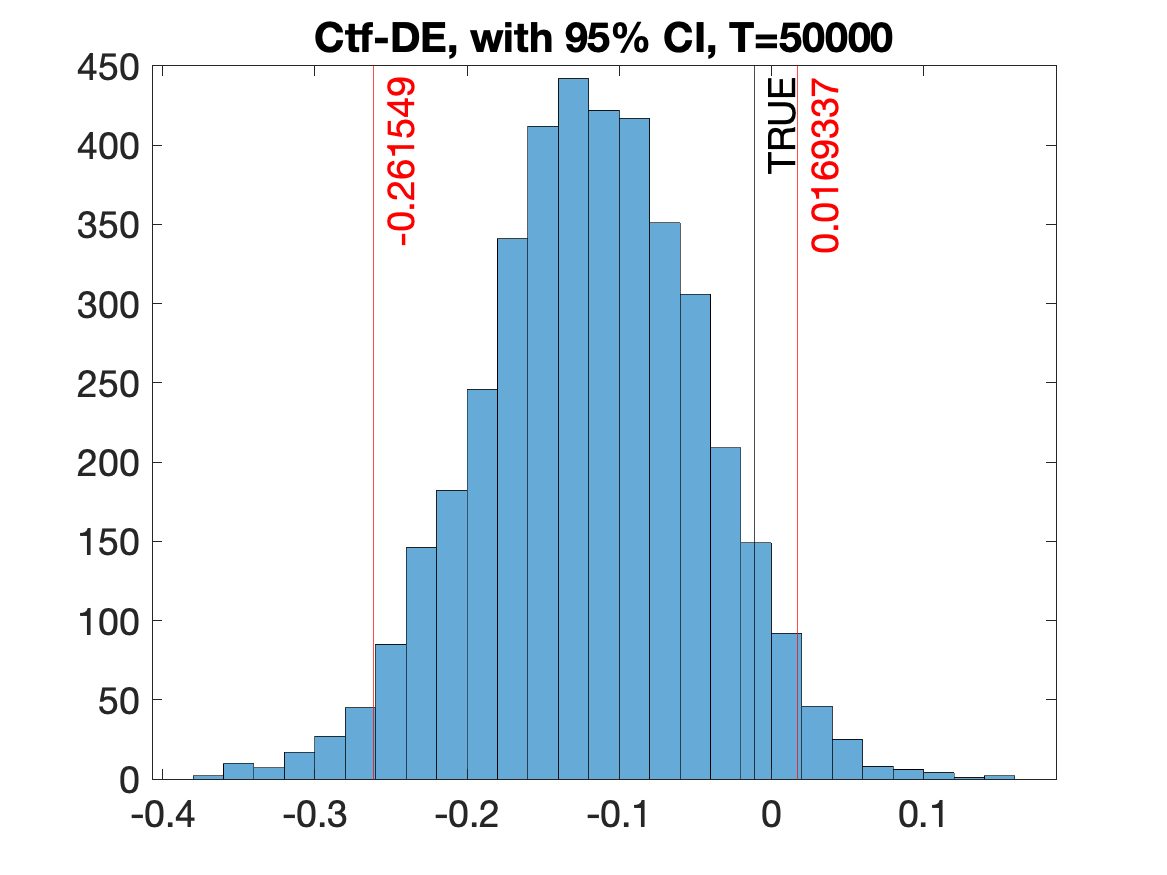}
    \includegraphics[width=0.33\linewidth]{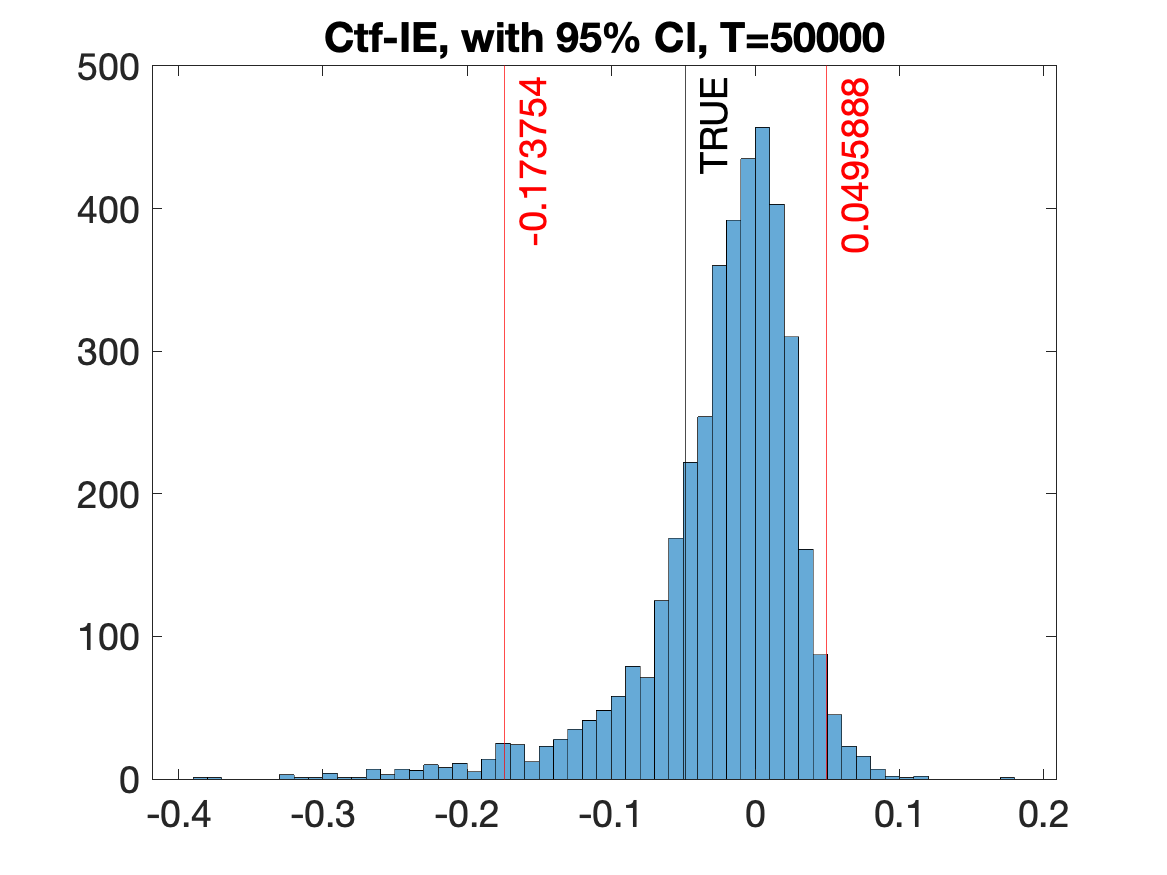}
    \includegraphics[width=0.33\linewidth]{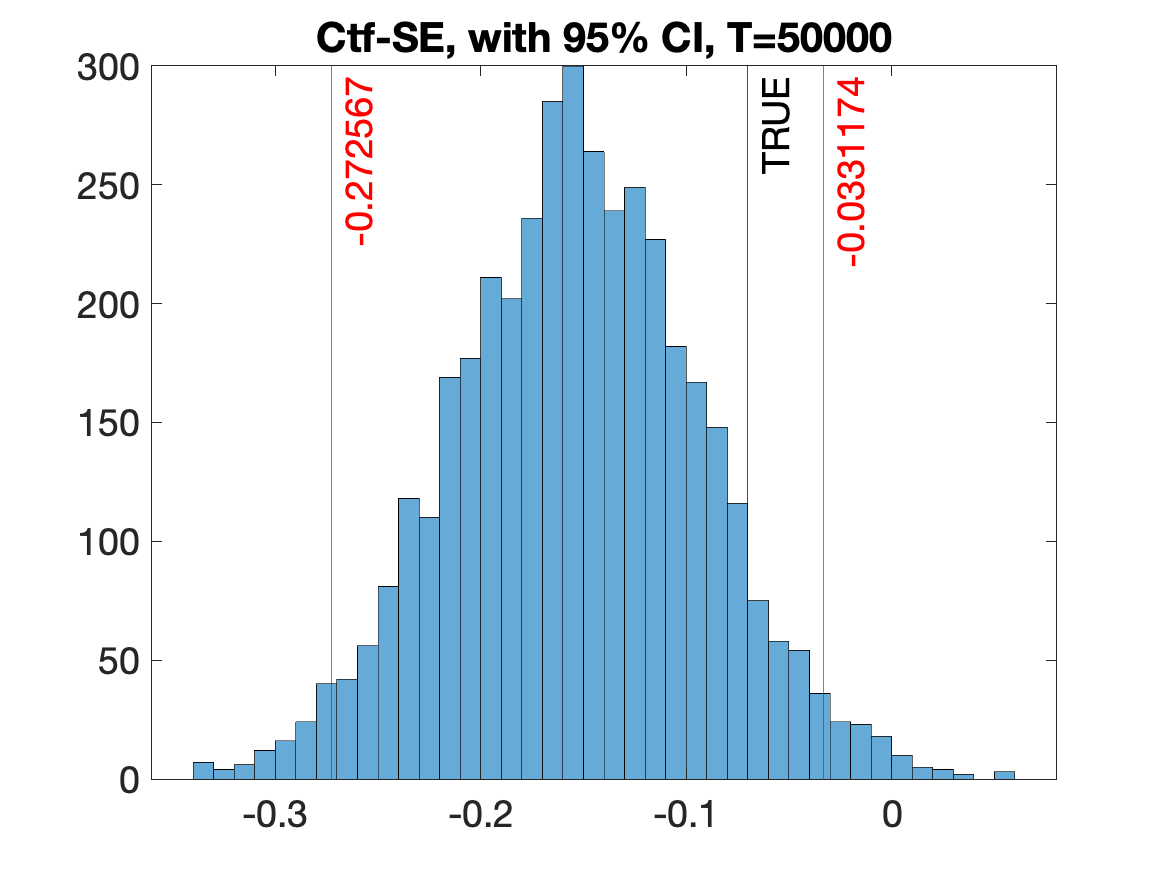}
    \caption{Histograms for DE, IE, and SE, obtained from the simulation dataset. The black vertical line is the ground-truth value (labeled as TRUE) and the two red lines show 95\% confidence interval (2.5\% top, 2.5\% bottom).}
    \label{fig.simu.hist}
\end{figure*}
\section{COMPAS Case study}
\label{sec:compas}

In this section, we demonstrate \Cref{alg.fairness}, IDFair, on a real-world dataset. We describe the COMPAS dataset in Section \ref{sec:data}, identify its causal graphical structure in Section \ref{sec:scm}, and share the results of the sampling DE, IE, and SE in Section \ref{sec:results}.
Then, we instantiate the causal explanation formula from \cite{zhang2018fairness} and show how our results aligns with it in Section \ref{sec:tv}.
Finally, we use our algorithm to estimate other counterfactual fairness measures and compare them against DE, IE, and SE in Section \ref{sec:comparison}.

\subsection{COMPAS Dataset}\label{sec:data}

COMPAS (Correctional Offender Management Profiling for Alternative Sanctions) is an algorithm developed by Northpointe (now Equivant) used in the judicial process to predict a criminal defendant's recidivism score.
In 2016, ProPublica raised a concern that black defendants were often predicted to be at a higher risk of recidivism than they actually were (45\% vs. 23\%), whereas white defendants were often predicted to be less risky than they were (48\% vs. 28\%) based on a 2-year follow-up data \citep{propublica}.
ProPublica publicized their dataset on Github\footnote{https://github.com/propublica/compas-analysis/}, which contains COMPAS scores for \emph{Risk of Recidivism} ranging from $1$ to $10$, as well as each defendant's race, sex, age, criminal history (prior counts), and charge degree.

In our analysis, we assume $A = \{ \mbox{Race, Sex, Age} \}$ be protected attributes that shall not affect the recidivism score directly, $X = \{ \mbox{Charge degree, Prior counts} \}$ be measurements that could be related to an individual's actual recidivism ($X$ could be either confounder $Z$ or mediator $W$), and $Y = \{ \mbox{Risk of Recidivism} \}$ be the outcome of the algorithm to be assessed if it is fair or not.
Table \ref{t.variables} shows the definition of our variables.

\begin{table}[ht]
\caption{Definition of variables}
\label{t.variables}
\begin{center}
\begin{tabular}{@{}ccc@{}}
\toprule
Variable           & 0             & 1                \\ \toprule
Race($A$)          & Others        & African-American \\
Age($A$)           & Less than 30  & Over 30          \\
Sex($A$)           & Female        & Male             \\ \midrule
Charge Degree ($W_1$) & Misdemeanor   & Felony           \\
Prior Counts ($W_2$)  & $\leq 2$ & $>2$  \\ \midrule
Score ($Y$, $1 \sim 10$)        & $\leq 5$ & $>5$  \\ \bottomrule
\end{tabular}
\end{center}
\end{table}

\subsection{Causal Graphical Structure of COMPAS}\label{sec:scm}

This section illustrates Steps 1-2 of \Cref{alg.fairness}, where the graphical structure $\mathcal{G}$ is learned using the FCI algorithm, implemented in an R package \texttt{pcalg} \citep{kalisch2012causal}. First, we come up with an equivalent class of candidate causal diagrams using FCI algorithm, then use qualitative domain knowledge to fine-tune the result and filter out incompatible diagrams. This approach allows non-Markovian cases where unobserved confounders generally exist. We assume that there exist no direct edges from $X$ (Charge degree, Prior counts) to the protected attributes $A$ (Age, Sex, or Race).

We show in \Cref{fig.equiv.class} in Appendix \ref{app.equiv.class}, the equivalent class inferred by the FCI algorithm. Based on this, we leverage our background knowledge and make informed decisions about edge types.
First, we assume that there cannot be a direct edge between the protected attributes, i.e., age cannot be influenced by sex.
Next, considering that the determination of protected attributes precedes the establishment of charge degree or prior counts, we preclude any directed edges from charge degree or prior counts to the protected attributes. 
Similarly, since prior counts are already decided before the charge degree, we prohibit directed edges from charge degree to prior counts. Finally, we permit bi-directed edges where such a possibility exists.
The finalized graphical model assumed throughout the analysis is shown in \Cref{fig.scm}.

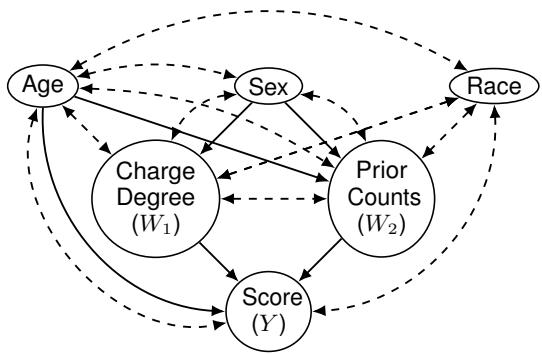
\begin{figure}[ht]
    \centering
    \begin{tikzpicture}
			\def\outerr{3.2}
			\def\innerr{3}

			\node[textnode] (A) at (0, 0) {Age};
			\node[textnode] (S) at (3, 0) {Sex};
			\node[textnode] (R) at (6, 0) {Race};
			\node[textnode, text width=1.1cm, align=center] (W1) at (1.5, -1.5) {Charge Degree ($W_1$)};
			\node[textnode, text width=0.9cm, align=center] (W2) at (4.5, -1.5) {Prior Counts ($W_2$)};
			\node[textnode, text width=0.7cm, align=center] (Y) at (3, -3) {Score ($Y$)};

			\draw[dir] (A) -- (W2);
			\draw[dir] (S) -- (W2);
                \draw[dir] (S) -- (W1);
                \draw[dir] (A) to [bend right = 45] (Y);
                \draw[dir] (W1) -- (Y);
                \draw[dir] (W2) -- (Y);
			\draw[bidir] (A) to [bend left = 15] (S);
			\draw[bidir] (A) to [bend left = 30] (R);
			\draw[bidir] (A) to [bend right = 60] (Y);
			\draw[bidir] (R) to [bend left = 45] (Y);
			\draw[bidir] (R) -- (W2);
                \draw[bidir] (R) -- (W1);
                \draw[bidir] (R) -- (W1);
                \draw[bidir] (W1) -- (W2);
                \draw[bidir] (A) -- (W1);
                \draw[bidir] (S) to [bend right = 30] (W1);
                \draw[bidir] (S) to [bend left = 30] (W2);
                \draw[bidir] (A) to [bend left = 15] (W2);
    \end{tikzpicture}
    \caption{Inferred causal diagram for the COMPAS dataset using the FCI algorithm and domain knowledge.}
    \label{fig.scm}
\end{figure}

From this composite graphical model, we can obtain a simplified version for each protected attribute. \Cref{fig.case} shows graphical models for (a) $A = \{\text{Race}\}$, (b) $A = \{\text{Age}\}$, and (c) $A = \{\text{Sex}\}$. In this case, we are only considering one graph for each case, i.e., the equivalence class $\1E^*$ in Step 2 of \Cref{alg.fairness} has only one candidate.

Finally, we obtain the minimum number of states $K$ for each exogenous variable $U$ as described in Section \ref{sec:hyperparam}
Given that all endogenous variables are binary, it follows that for exogenous variables $U_1$ and $U_2$, we require a minimum of $K_1=K_2=2^4+1=17$, $K_1=K_2=2^5+1=33$, or $K_1=K_2=2^6+1=65$ states when $A=\text{Race}$, $A=\text{Age}$, or $A=\text{Sex}$, respectively.
For the experiments, both $K_1$ and $K_2$ were set to have the same value of $20, 40,$ and $70$ for $A=\text{Race}$, $A=\text{Age}$, and $A=\text{Sex}$, respectively.

\begin{figure}[t]
    \centering
    \hfill
    \begin{subfigure}{0.33\linewidth}\centering
            \begin{tikzpicture}
			\def\outerr{3.2}
			\def\innerr{3}

			\node[vertex] (A) at (0.5, -1.5) {A};
			\node[vertex] (Y) at (0.5, -3) {Y};
			\node[vertex] (X1) at (-0.5, -1.5) {$W_1$};
			\node[vertex] (X2) at (1.5, -1.5) {$W_2$};
			\node[uvertex] (U1) at (0, 0) {$U_1$};
			\node[uvertex] (U2) at (1, 0) {$U_2$};

			\draw[dir] (X1) -- (Y);
			\draw[dir] (X2) to [bend left = 15] (Y);
			\draw[unobsdir] (U1) -- (A);
			\draw[unobsdir] (U1) -- (X1);
			\draw[unobsdir] (U1) -- (X2);
			\draw[unobsdir] (U2) -- (A);
			\draw[unobsdir] (U2) to [bend left = 15]  (Y);
		\end{tikzpicture}
            \caption{Race}
            \label{fig.case.a}
        \end{subfigure}\hfill
	\begin{subfigure}{0.33\linewidth}\centering
        \begin{tikzpicture}
			\def\outerr{3.2}
			\def\innerr{3}

			\node[vertex] (A) at (0.5, -1.5) {A};
			\node[vertex] (Y) at (0.5, -3) {Y};
			\node[vertex] (X1) at (-0.5, -1.5) {$W_1$};
			\node[vertex] (X2) at (1.5, -1.5) {$W_2$};
			\node[uvertex] (U1) at (0, 0) {$U_1$};
			\node[uvertex] (U2) at (1, 0) {$U_2$};

			\draw[dir] (X1) -- (Y);
			\draw[dir] (X2) to [bend left = 15] (Y);
                \draw[dir] (A) -- (X2);
                \draw[dir] (A) -- (Y);
			\draw[unobsdir] (U1) -- (A);
			\draw[unobsdir] (U1) -- (X1);
			\draw[unobsdir] (U1) -- (X2);
			\draw[unobsdir] (U2) -- (A);
			\draw[unobsdir] (U2) to [bend left = 15] (Y);
		\end{tikzpicture}
            \caption{Age}
            \label{fig.case.b}
    \end{subfigure}\hfill
    \begin{subfigure}{0.33\linewidth}\centering
    \begin{tikzpicture}
			\def\outerr{3.2}
			\def\innerr{3}

			\node[vertex] (A) at (0.5, -1.5) {A};
			\node[vertex] (Y) at (0.5, -3) {Y};
			\node[vertex] (X1) at (-0.5, -1.5) {$W_1$};
			\node[vertex] (X2) at (1.5, -1.5) {$W_2$};
			\node[uvertex] (U1) at (0, 0) {$U_1$};
			\node[uvertex] (U2) at (1, 0) {$U_2$};

			\draw[dir] (X1) -- (Y);
			\draw[dir] (X2) to [bend left = 15] (Y);
                \draw[dir] (A) -- (X1);
                \draw[dir] (A) -- (X2);
			\draw[unobsdir] (U1) -- (A);
			\draw[unobsdir] (U1) -- (X1);
			\draw[unobsdir] (U1) -- (X2);
			\draw[unobsdir] (U2) -- (A);
			\draw[unobsdir] (U2) to [bend left = 15] (Y);
		\end{tikzpicture}
            \caption{Sex}
            \label{fig.case.c}
    \end{subfigure}\null
    \caption{Graphical model for each protected variable with two exogenous variables. Each exogenous variables have 17 states. $A$ denotes (a) race, (b) age, (c) sex, $W_1$ denotes charge degree, $W_2$ denotes prior counts, and $Y$ denotes the predicted COMPAS score.}
    \label{fig.case}
\end{figure}
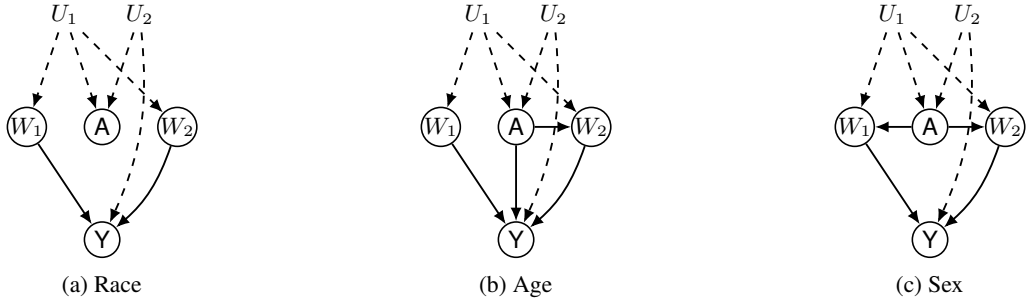

\subsection{Estimated Counterfactual DE, IE,and SE}\label{sec:results}
In this section, we show the results of running \Cref{alg.gibbs} on COMPAS dataset to obtain bounds for counterfactual fairness measures. We start with a uniform initialization for $P(U_1)$ and $P(U_2)$, using respective $K_1$ and $K_2$. The burn-in period $M$ was set to $2000$, and $N=4000$ samples were collected after the burn-in for all experiments. Throughout all experiments, we consistently set the error rate $\delta = 0.05$. However, as previously mentioned, one could specify $\delta$ to be any reasonable real value in $[0, 1)$ based on the analysis. We highlight main findings here; more visualizations and discussion are deferred to Appendix \ref{app.compas}.

\subsubsection{(a) A=Race}\label{sec:race}

When we analyze the counterfactual effects of Race, the direct and indirect effects were zero by the graphical structure---there is no direct or indirect path from $A$ to $Y$. The first histogram in \Cref{fig.compas.hist} shows the distribution of $4000$ samples' $SE$ we obtained from Gibbs sampling. The width of the $95\%$ confidence interval is $0.0423$ (about $4\%p$). In words, this means that if a non-African-American individual had been African-American, the probability that the individual is assigned to a high score (greater than $5$) would have become at least about $23\%$ higher and at most about $27\%$ higher, solely through the spurious paths (backdoor paths).

\begin{figure*}[t]
    \centering
    \includegraphics[width=0.33\linewidth]{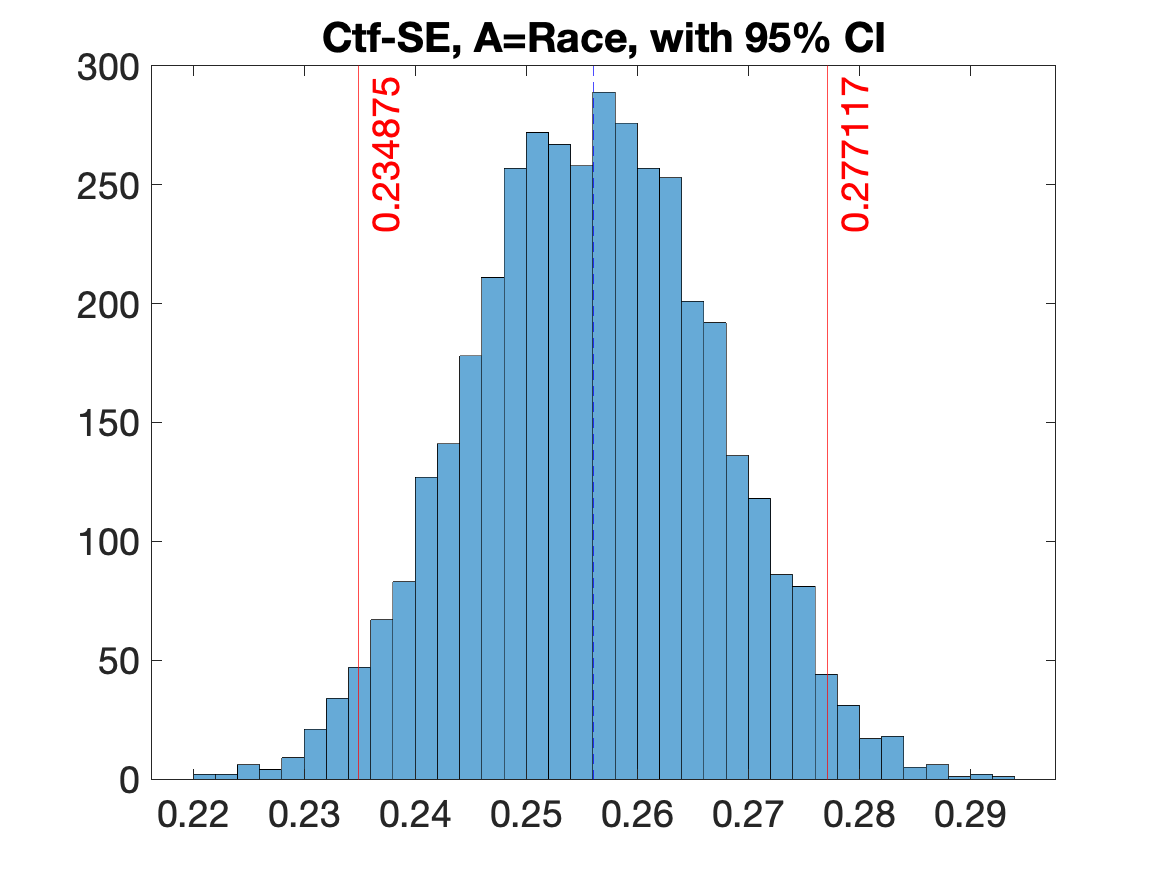}
    \includegraphics[width=0.33\linewidth]{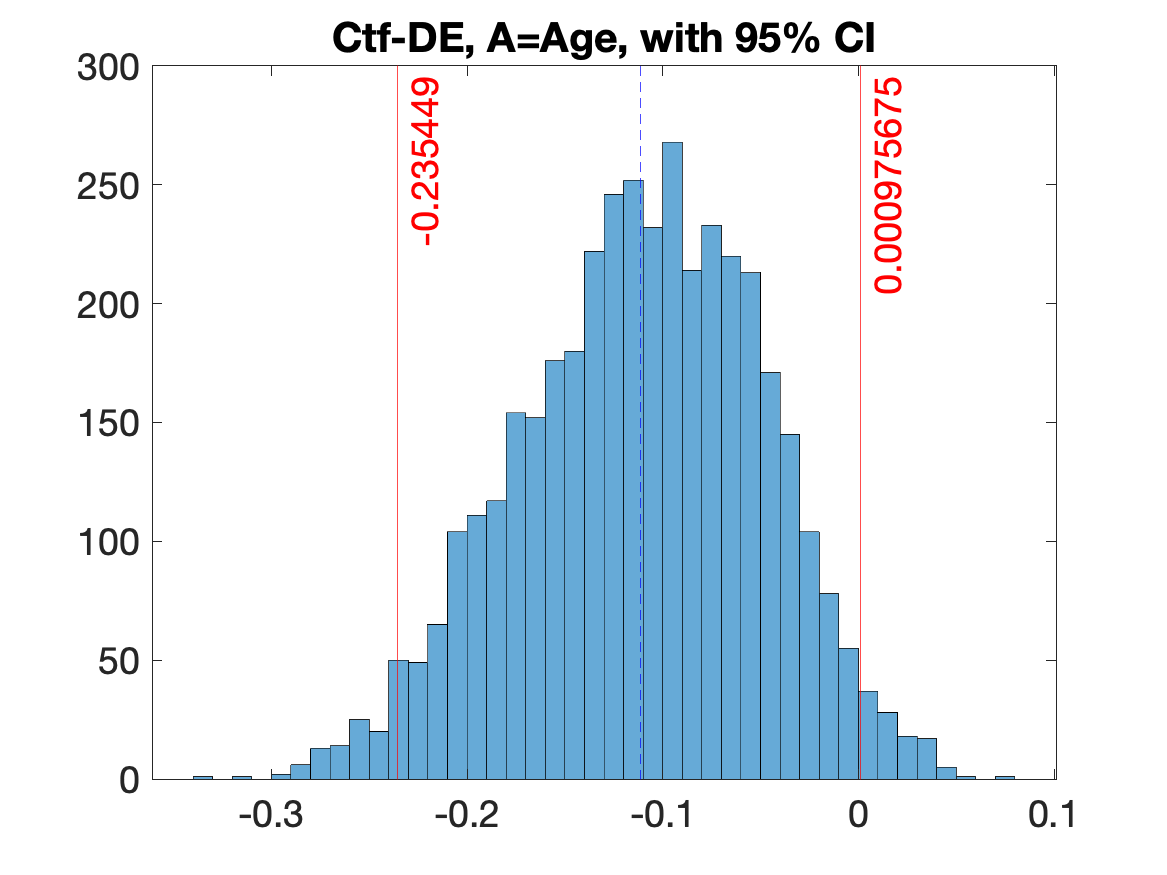}
    \includegraphics[width=0.33\linewidth]{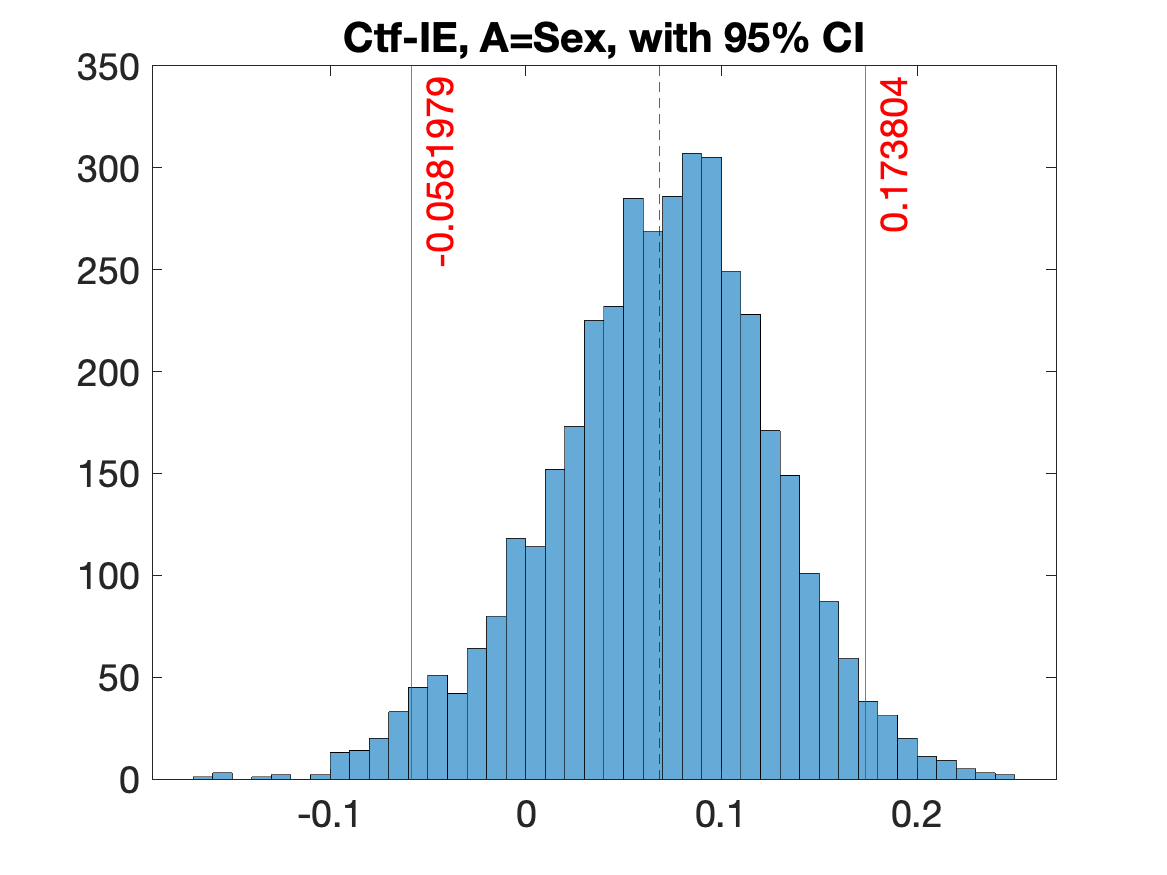}
    \caption{Histogram of SE when $A=\mbox{Race}$ (left), DE when $A=\mbox{Age}$ (middle), IE when $A=\mbox{Sex}$ (right). The two red lines show 95\% confidence interval (2.5\% top, 2.5\% bottom).}
    \label{fig.compas.hist}
\end{figure*}

\subsubsection{(b) A=Age}\label{sec:age}

The second histogram in \Cref{fig.compas.hist} shows the confidence interval for DE, $(-0.2354, 0.0009)$, when $A=\text{Age}$. The CI is wide but still indicates some portion of negative effect, i.e., if you were older, you are less likely to be assigned high risk. The confidence interval for IE is centered around zero and ranges from $-10$\% to $10$\%, hence it is hard to draw any conclusion about the indirect effect.
The confidence interval of SE is $41.68\%p$ and this bound is too wide to draw any causal conclusion.

\begin{table*}[t]
\begin{center}
\caption{95\% Confidence interval for counterfactual family of effects and Total variation of each protected attribute ($A$).}
\label{table.tv}
\begin{tabular}{@{}cccccr@{}}
\toprule
$A$  & $DE_{A=1,A=0}|A=1$ & $IE_{A=0,A=1}|A=1$ & $SE_{A=0,A=1}$ & TV Bound & TV \\ \midrule
Race 
& (0, 0)
& (0, 0)
& (0.2348, 0.2771)
& (0.2348, 0.2771)
& $0.2544$ \\
Age  
& (0.0333, 0.4511)
&(-0.1113, -0.0352)
&(-0.2609, 0.3164)
&(-0.3399, -0.1413)
& $-0.2018$\\
Sex  
& (0, 0)
&(-0.0196, 0.1656)
&(-0.1284, 0.1521)
&(0.0066, 0.0872)
& $0.0629$\\ \bottomrule
\end{tabular}
\end{center}
\end{table*}

\subsubsection{(c) Sex}\label{sec:sex}

When $A=\text{Sex}$, similar to the Race case, the DE is zero by definition (no direct path from $A$ to $Y$). 
The last histogram in \Cref{fig.compas.hist} shows a wide confidence interval for IE ($23.1\%p$, CI=$(-0.0581,0.1738)$). This could stem from the inherent complexity of the graphical model, as evidenced by the wider spectrum shown in the convergence graph (\Cref{fig.sex.converge} in Appendix \ref{app.compas}).

\subsubsection{Comparison to Total Variation (TV)}\label{sec:tv}

Lastly, we demonstrate Theorem 1 of \cite{zhang2018fairness} with the results we obtained, which shows total variation as a linear combination of DE, IE, and SE.
We take the first version of the theorem with $a_0=0$ and $a_1=1$:
\begin{align*}
    TV_{A=0,A=1}(Y=1) = &SE_{A=0,A=1}(Y=1) + IE_{A=0,A=1}(Y=1|A=1) \\
&- DE_{A=1,A=0}(Y=1|A=1).
\end{align*}
\Cref{table.tv} shows the values for each term in the above equation. The second last column (TV Bound) indicates the bound for the $TV_{A=0,A=1}(Y=1)$, and the total variation (the last column) falls within the TV bound for all three cases.

\subsection{Comparison to Other Fairness Measures}\label{sec:comparison}

This section compares the quantitative and qualitative findings using various fairness measures with our results.
First, we use our algorithms to estimate other \emph{counterfactual} measures, such as the Counterfactual Effect (CE) based on the counterfactual fairness definition by \cite{kusner2017counterfactual} and the Path-specific Counterfactual Effect (PSE) by \cite{chiappa2019path}, and compare with our findings.
Then, we qualitatively compare our results to previous findings with different statistical fairness measures.

\subsubsection{Counterfactual Effect (CE)}
When $A=\text{Race}$, as in \Cref{fig.case.a}, the $CE$ is zero by the graphical structure---there is no direct or indirect path from $A$ to $Y$. This means that the COMPAS algorithm abides by the counterfactual fairness by \cite{kusner2017counterfactual} with respect to race.

With $A=\text{Age}$, we calculate the effect of changing $A=0$ (baseline, age less than $30$) to $A=1$ (intervention, age over $30$) conditioned on $A=0$, $W_1$, and $W_2$.
Since the sampling did not converge when $W_2=0$, we present CE conditioned on $W_2=1$ only.
The $95\%$ confidence interval for CE conditioned on $A=0, W_1$ and $W_2=1$ were estimated to be $(0.0290,0.3665)$ when $W_1=0$ and $(0.0120, 0.2544)$ when $W_1=1$ (\Cref{fig.comparison.hist}, left and middle).
The confidence interval lies on the positive side, regardless of the conditioning value for $W_1$. This means that changing age from less than $30$ to over $30$ has a positive counterfactual effect on the COMPAS score, although the precise magnitude is challenging to ascertain due to the wide confidence interval.
Although these results do not necessarily contradict the results from Section \ref{sec:age}, this definition fails to capture the negative DE we obtained.

With $A=\text{Sex}$, we compute the effect of changing $A=0$ (baseline, female) to $A=1$ (intervention, male) conditioned on $A=0, W_1$, and $W_2$.
The counterfactual effect (CE) bound exhibits a relatively positive trend when $W_2 = 0$ (fewer prior charges; bounds are $(0.0794,0.4737)$ when $W_1=0$, $(-0.0626,0.3098)$ when $W_1=1$) and a negative trend when $W_2 = 1$ (more prior charges; bounds are $(-0.3556,0.0658)$ when $W_1 = 0$, $(-0.4081,-0.0200)$ when $W_1=1$)---see Appendix \ref{app.othermeasure}, \Cref{fig.sex.ctf}. This indicates that a female is more likely to receive a higher COMPAS score if she were male, particularly when she has fewer prior charges. However, this is reversed when she has more than two prior charges; she is likely to receive a lower COMPAS score had she been male if $W_2=1$.

\subsubsection{Path-Specific Effect (PSE) with respect to Age}

For $A=\text{Age}$, one can define the direct edge from $A$ to $Y$ to be unfair and the indirect path from $A$ to $W_2$ to $Y$ to be fair. In this case, The PSE is identical to the unconditioned $DE$, and the obtained bound $(-0.0043,0.0983)$ mostly lies on the positive side (\Cref{fig.comparison.hist}, right).

Another way to define an \emph{unfair} path is to choose the path from $A$ to $W_2$.
We can estimate this PSE by setting $A=0$ as the input for $f_Y$ and $A=1$ as the input for $f_{W_2}$.
The obtained $95\%$ confidence interval $(0.0513,0.2426)$ indicates that the direction of the PSE is positive when we change from $A=0$ to $A=1$ ((Appendix \ref{app.othermeasure}, right side of \Cref{fig.age.pse}).
This aligns with Kusner's $CE$ estimation, although the magnitude of the effect is different.

\begin{figure*}[t]
    \centering
    \includegraphics[width=0.33\linewidth]{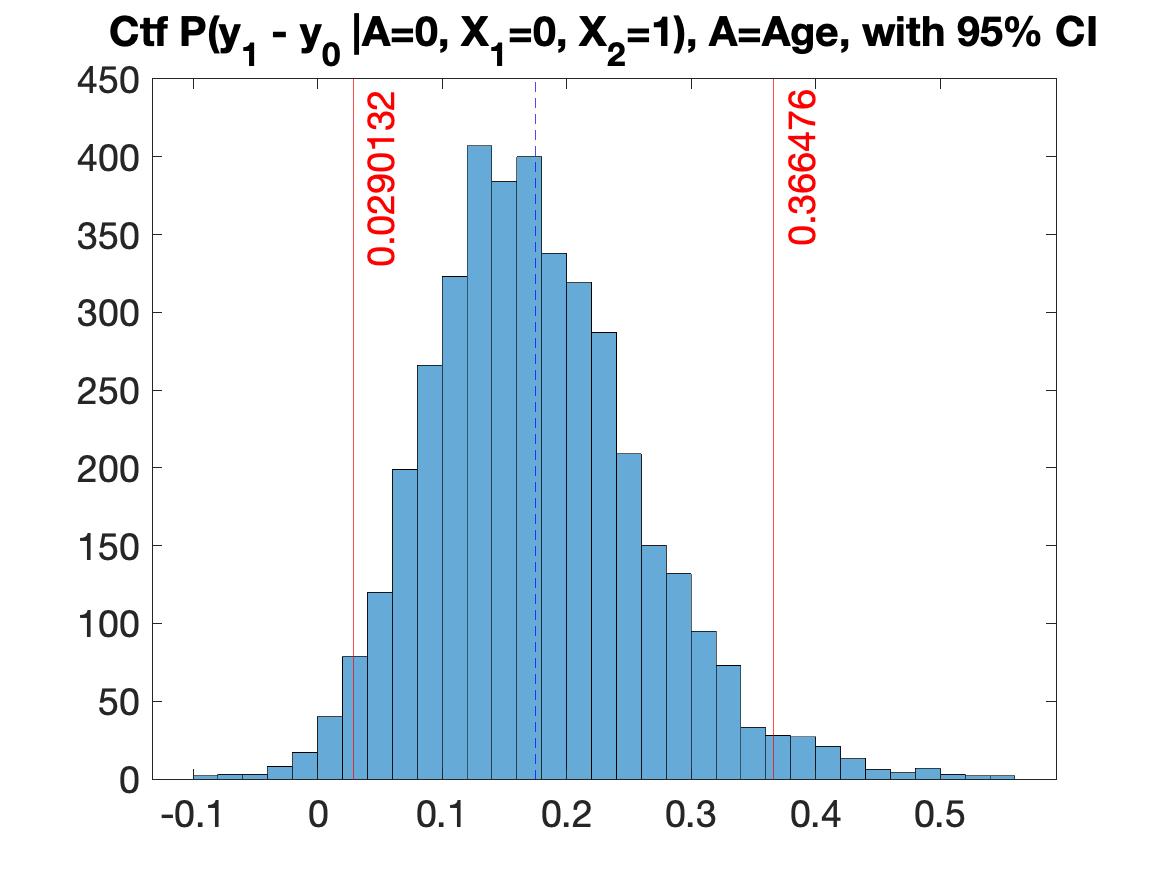}
    \includegraphics[width=0.33\linewidth]{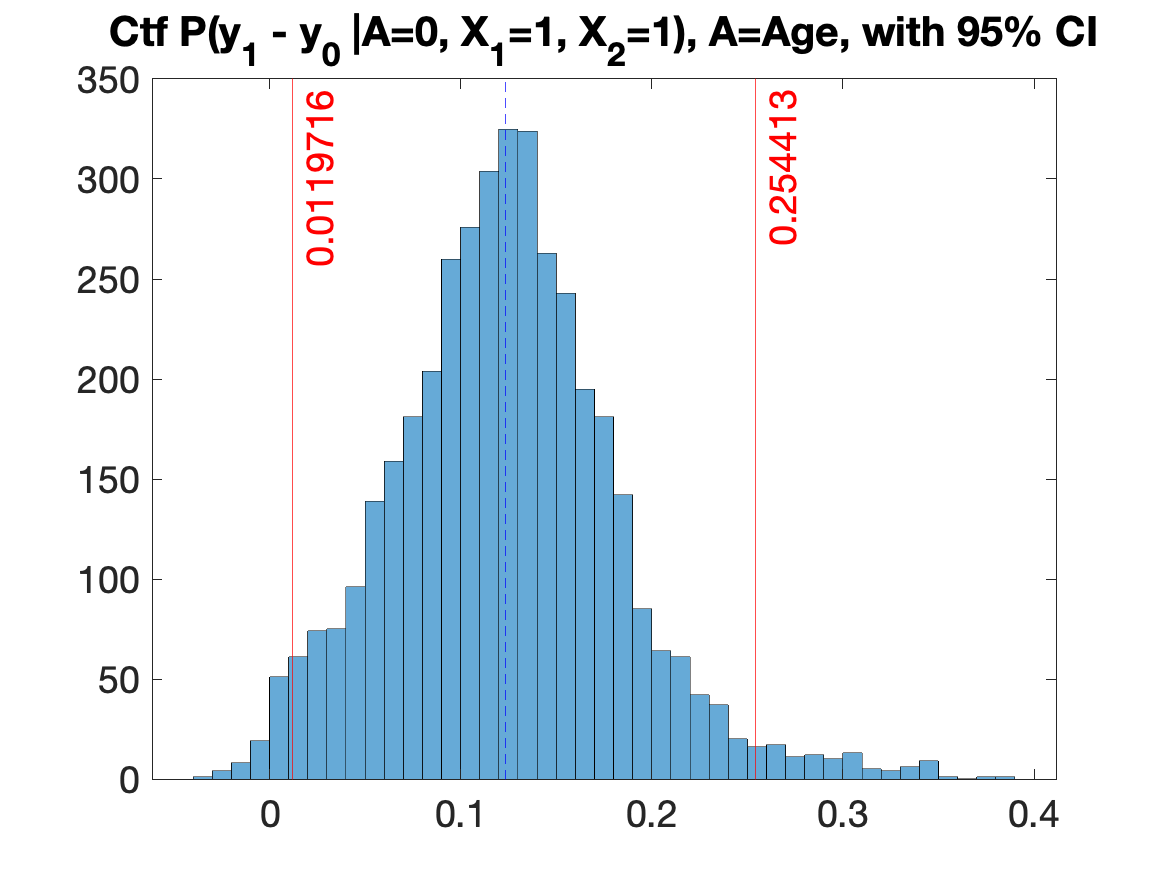}
    \includegraphics[width=0.33\linewidth]{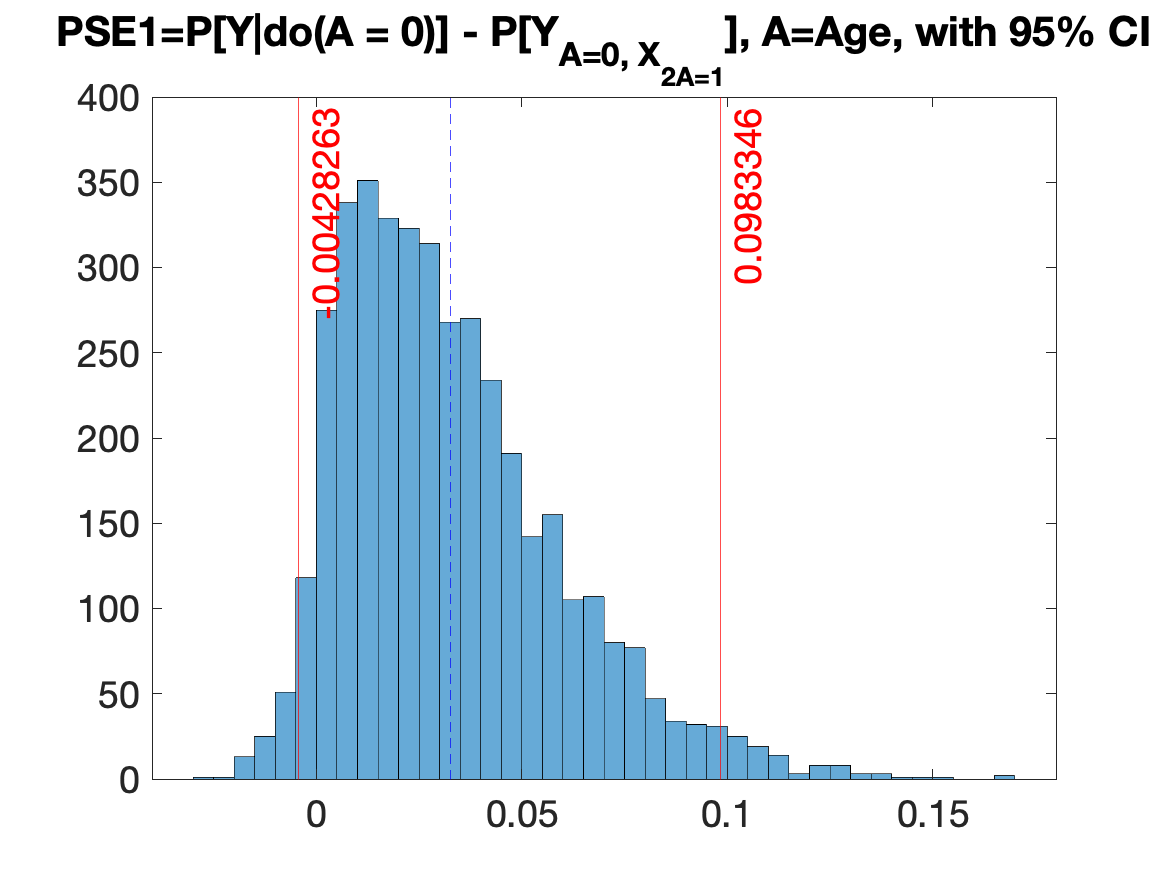}
    \caption{Histograms of other counterfactual fairness measures for $A=\mbox{Age}$. Each histogram shows CE conditioned on $A=0, W_1=0, W_2=1$ (left), CE conditioned on $A=0, W_1=1, W_2=1$ (middle), PSE for the path $A \to Y$ (right). The two red lines show 95\% confidence interval (2.5\% top, 2.5\% bottom). More graphs from Section \ref{sec:comparison} are deferred to Appendix \ref{app.othermeasure}}
    \label{fig.comparison.hist}
\end{figure*}

\subsubsection{Other (Non-counterfactual) Fairness Measures}

The first accusation against COMPAS scores was that the false positive rate was higher for black defendants than for white defendants, and the false negative rate was higher for white defendants than for black defendants \citep{propublica}.
While we cannot directly compare our findings to analyses of false positive or false negative rates---since the notion of counterfactual fairness does not necessarily account for the actual outcome---we do find evidence of racial bias in COMPAS scores through a spurious path. This spurious effect may explain the differences in false positive and false negative rates between races, favoring white defendants.

It was also suggested that the COMPAS score assigns higher score to women than men \emph{``[b]ecause women reoffend at lower rates than men with similar criminal histories''} \citep{corbett2023measure}. Based on our analyses, the direct path from $A=\mbox{Sex}$ to the COMPAS score ($Y$) was absent. If the aforementioned claim were to be true, having a direct path connecting $A$ (Sex) and $Y$ (COMPAS Score) could help mitigate the currently observed unfair behavior against women.

\section{Discussion}\label{sec:discussion}

\paragraph{\bf Interpretation of Estimated Bounds}
The confidence interval output by Algorithm \ref{alg.fairness} is obtained by testing any possible distribution $P(U)$ over hidden variables and the function $f_V$ determining observed variables against the observation data. For example, $95\%$ confidence interval means that of all $P(U)$ and $f_V$ that can generate the observation data, $95\%$ of the cases will result in the (counterfactual) fairness measure that falls within the calculated confidence interval. Hence, the confidence level accounts for the randomness from the ignorance region (non-identifiable parts in the structure), as well as having a finite number of samples. If desired, our approach can be utilized to approximate the worst-case bound by taking the minimum and the maximum of the samples probabilities. In such cases, increasing the number of samples $N$ may also help.

\paragraph{\bf Choice of Fairness Definition}

In the realm of counterfactual fairness, a variety of proposed definitions often lack alignment with one another, leading to instances where certain definitions are deemed inadmissible with respect to others. A notable example is the CE, which is inadmissible with respect to Direct Effect (DE), Indirect Effect (IE), and Spurious Effect (SE) \citep{plecko2022causal}. This implies that observing $CE=0$ does not guarantee the non-zero DE, IE, or SE. This phenomenon was evident in the case of $A=\text{Race}$, where CE (or PSE) was inherently zero, but a significant SE was estimated. Thus, the choice of fairness metric becomes crucial, as it can lead to divergent conclusions.

Our analysis of fairness concerning Race serves as an illustrative example. Among all the tested definitions, only SE was capable of capturing racial bias, exhibiting a relatively narrow confidence interval around $25\%$. Some may assert that the spurious effect is not \emph{causal}, operating solely through a backdoor path. However, we propose that it should still be regarded as an unfair path, particularly when the testing variable $Y$ represents an algorithmic output, and the variables in the spurious paths are not exhaustively known.

Consider the spurious path in the Race graph involving, for instance, $U_2=\mbox{ the zip code of residence}$. If the outcome variable of interest $Y$ were the actual observed recidivism in the real world, we might contend that the spurious path is not unfair due to its lack of causality. However, when $Y$ represents the recidivism score generated by an algorithm, it introduces tools for the algorithm to \emph{infer} zip code either through input or estimation, potentially leading to actively discriminating against African-American individuals.

Counterfactual fairness stands as a logically sound concept, yet the consensus on its precise definition remains elusive. In our paper, we opted for the counterfactual family of effects (DE, IE, and SE) proposed by \cite{zhang2018fairness}, deeming it the most appropriate among available options. However, alternative ways to define counterfactual fairness may exist, and as we gain a deeper understanding of counterfactual probabilities and formulate refined definitions, our \Cref{alg.fairness} can be applied to accommodate the new definitions.

\paragraph{\bf Analysis of COMPAS scores}

The qualitative analysis of COMPAS scores requires several caveats. First, the validity of our findings depends on the assumption that the underlying causal structural model is correct. The causal discovery process is influenced by various factors, including the choice of algorithm and the definition of variables; for example, adopting a different causal discovery algorithm could result in slightly different causal diagrams.
While this does not invalidate our results, it emphasizes the need to interpret the findings with the understanding that the underlying causal model may change. To address this, Algorithm \ref{alg.fairness} is designed to account for multiple causal models, providing a more robust framework for analysis.

Another key caveat is that the interpretation of results is closely tied to the definitions of the variables used. In our analysis, race is defined as a binary variable: Black versus Others. As a result, the analysis specifically addresses counterfactual scenarios involving changes in race between Black and non-Black groups.
This framing does not extend to other populations, such as the Hispanic demographic. For instance, if we had defined the Race variable as Hispanic versus Others, the results would have been entirely different. These limitations must be carefully considered when interpreting and generalizing the findings.

\section{Conclusion}

This paper addressed the non-identifiability issue associated with counterfactual probabilities by introducing a sampling algorithm (\Cref{alg.fairness}) for their estimation.
The simulation study in Section \ref{sec:simulation} affirmed the validity of our algorithm—all results encompass the ground-truth value within the $95\%$ confidence interval.
Then, we demonstrated our algorithm in Section \ref{sec:compas} by evaluating algorithmic fairness of COMPAS recidivism scores with respect to race, age, and sex.
The findings revealed a significant spurious effect (SE) of $25.5\pm 2\%$ when the race changed to African-American from others. Additionally, a negative direct effect (DE) was identified when age transitioned from less than $30$ to over $30$. In the case of sex, the confidence interval was too wide and included zero, not yielding decisive conclusions. This lends credibility to our COMPAS analysis, affirming the reliability of our approach in efficiently assessing algorithmic fairness.

\bibliographystyle{unsrtnat}
\bibliography{references}


\newpage

\newgeometry{onecolumn} 
\appendix

\section{Additional Results} \label{app:exp}

In this appendix, we provide more visualizations to better understand our paper. Some of the representative results are already included in the main part, but we show them again in this section for completeness.

\subsection{From Section \ref{sec:simulation}. Simulation Study}\label{app.simu}
In this section, we show more visualizations of the results from the simulation study.

In \Cref{fig.simu.all}, the top left graph shows the convergence of $P[Y_{A=0}=1|A=0]$ computed from the samples. The X axis is time ($1^{st}$ to $4000^{th}$ sample) and the horizontal red line indicates the ground truth value for $P[Y=1|A=0]$. For the three histograms, the black vertical line is the groundtruth value (labeled as TRUE)and the two red lines show 95\% confidence interval (2.5\% top, 2.5\% bottom).

\begin{figure}[ht]
    \centering
    \includegraphics[width=6.5cm]{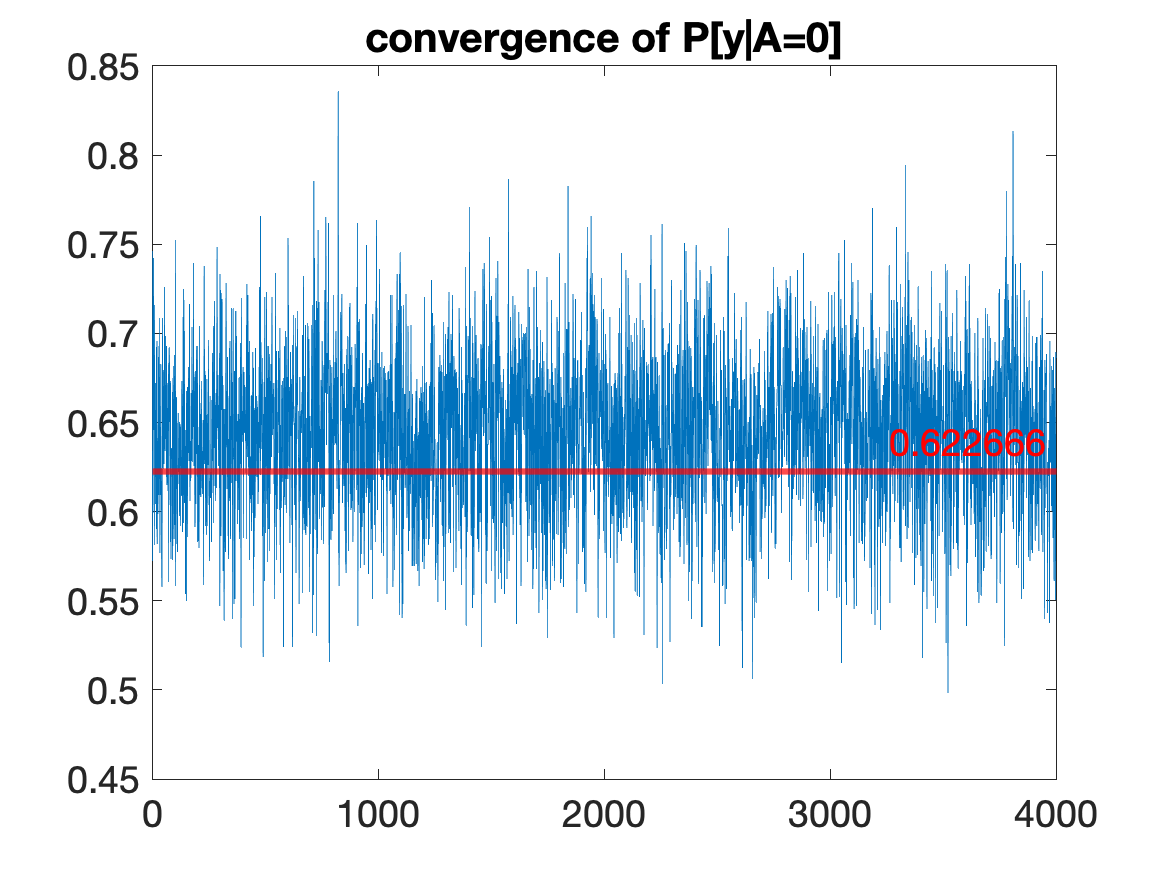}
    \includegraphics[width=6.5cm]{img/simulation/hist-Ctf-DE.png}
    \includegraphics[width=6.5cm]{img/simulation/hist-Ctf-IE.png}
    \includegraphics[width=6.5cm]{img/simulation/hist-Ctf-SE.png}
    \caption{The convergence graph (top left), and histograms for DE (top right), ID (bottom left) and SE (bottom right), obtained from the simulation dataset.}
    \label{fig.simu.all}
\end{figure}

\subsection{From Section \ref{sec:scm}. Causal Graphical Structure of COMPAS}
\label{app.equiv.class}

Figure \ref{fig.equiv.class} show the equivalence class identified from the COMPAS data using the FCI algorithm.
FCI algorithm identifies a equivalence class $\mathcal{E}$ of causal diagrams in a compact form of a partial ancestral graph (PAG), where the circle edges represent undetermined
edge marks \cite{zhang2008completeness}.
To translate the PAG into a DAG, we need to decide the mark for circle edges using qualitative knowledge such as time order among the variables.
More details about how to interpret the equivalence class can be found in Section 3 of \cite{zhang2008completeness}.

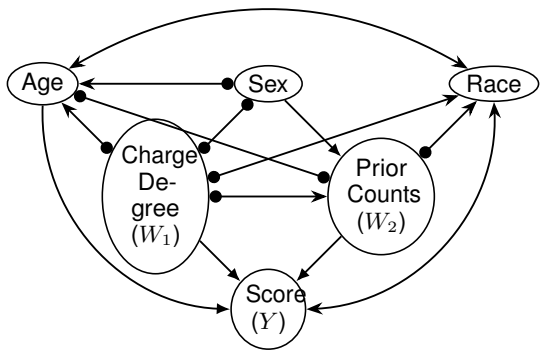
\begin{figure}[ht]
    \centering
    \begin{tikzpicture}
			\def\outerr{3.2}
			\def\innerr{3}

			\node[textnode] (A) at (0, 0) {Age};
			\node[textnode] (S) at (3, 0) {Sex};
			\node[textnode] (R) at (6, 0) {Race};
			\node[textnode, text width=0.9cm, align=center] (W1) at (1.5, -1.5) {Charge Degree ($W_1$)};
			\node[textnode, text width=0.9cm, align=center] (W2) at (4.5, -1.5) {Prior Counts ($W_2$)};
			\node[textnode, text width=0.6cm, align=center] (Y) at (3, -3) {Score ($Y$)};

			\draw[Circle-Circle, line width=0.25mm] (A) -- (W2);
			\draw[dir] (S) -- (W2);
                \draw[Circle-Circle, line width=0.25mm] (S) -- (W1);
                \draw[dir] (A) to [bend right = 45] (Y);
                \draw[dir] (W1) -- (Y);
                \draw[dir] (W2) -- (Y);
			\draw[Stealth-Circle, line width=0.25mm] (A) -- (S);
			\draw[Stealth-Stealth, line width=0.25mm] (A) to [bend left = 30] (R);
			\draw[Stealth-Stealth, line width=0.25mm] (R) to [bend left = 45] (Y);
			\draw[Stealth-Circle, line width=0.25mm] (R) -- (W2);
                \draw[Stealth-Circle, line width=0.25mm] (R) -- (W1);
                \draw[Circle-Stealth, line width=0.25mm] (W1) -- (W2);
			\draw[Stealth-Circle, line width=0.25mm] (A) -- (W1);
    \end{tikzpicture}
    \caption{Identified equivalent class $\mathcal{E}$ for the COMPAS dataset}
    \label{fig.equiv.class}
\end{figure}

Note that categorical variables must be binary for causal discovery; thus, Race is defined as $1$ for African-American and $0$ otherwise. For Age, Prior Counts, and Score, we binarize the variables for computational efficiency, following the rules in Table \ref{t.variables}. However, our sampling algorithm (Algorithm \ref{alg.gibbs}) can accommodate any finite discrete variables.

\subsection{From Section \ref{sec:results}. Estimated Counterfactual DE, IE, and SE}\label{app.compas}
In this section, we show more visualizations of the results from the COMPAS Case Study (Section \ref{sec:results}).

\subsubsection{$A=\mbox{Race}$}
The graph on the left side of \Cref{fig.race.all} shows the time series of a counterfactual probability $P[Y_{A=0}=1|A=0]$ computed from the samples as a blue line. The red line is the same value obtained from the observation (because $P[Y_{A=0}=1|A=0] = P[Y=1|A=0]$.) The values in blue line should converge to the red line, if the sampling method worked out. In this case, the counterfactual probability computed by the \Cref{alg.gibbs} lies between around $0.22$ and $0.25$, and the observational distribution (around $0.2350$) also falls within the obtained bound.

\begin{figure}[ht]
    \centering
    \includegraphics[width=6.5cm]{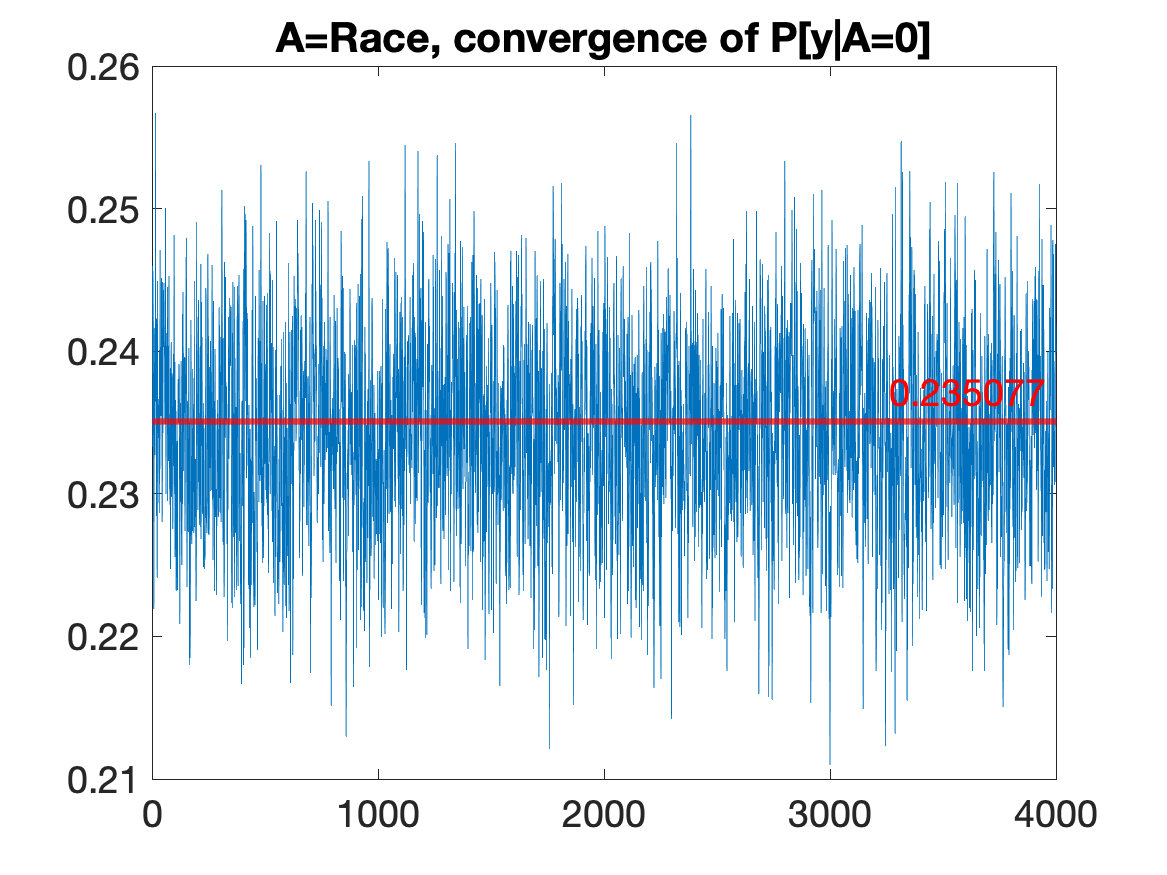}
    \includegraphics[width=6.5cm]{img/compas/Race-hist-Ctf-SE.png}
    \caption{
    \textit{Left:} the convergence of $P[Y_{A=0}=1|A=0]$ computed from the samples. The X axis is time (from first to $4000$-th sample) and the horizontal red line indicates $P[Y=1|A=0]$ computed from the data. \textit{Right:} SE of $A=\mbox{Race}$. The two red lines show 95\% confidence interval (2.5\% top, 2.5\% bottom).}
    \label{fig.race.all}
\end{figure}

\subsubsection{$A=\mbox{Age}$}

\Cref{fig.age.converge} shows the calculated $P(y|A=0)$, $A=\mbox{Age}$, over each round of draw (similar to the graph on the right side of \Cref{fig.race.all}). This is to test the convergence, where we expect the blue lines to converge to the red horizontal line (groud truth from observation). 
Indeed, the blue line and the red line overlaps well: blue lines falling mostly between $0.46$ ad $0.49$, and the red line indicates $0.47$. The variance in the blue line is slightly bigger than the Race case, but still within a reasonable range.

\begin{figure}[ht]
    \centering
    \includegraphics[width=5cm]{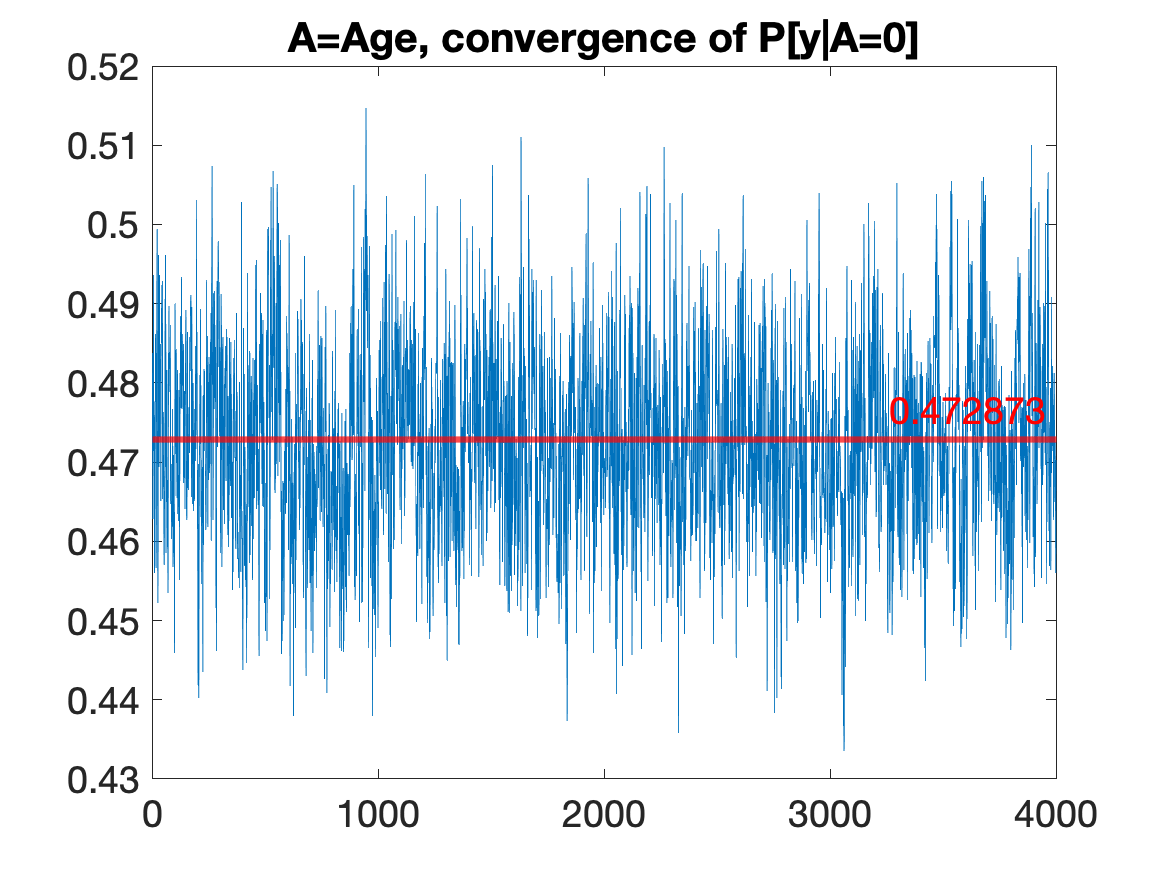}
    \caption{The convergence of $P[Y_{A=0}=1|A=0]$, when $A=\mbox{Age}$, computed from the samples. The X axis is time ($1^{st}$ to $4000^{th}$ sample) and the horizontal red line indicates $P[Y=1|A=0]$ computed from the data.}
    \label{fig.age.converge}
\end{figure}

\Cref{fig.age.hist} shows the histogram of all three counterfactual fairness quantities, when $A$ is set to be Age. As discussed earlier, the width of the confidence interval of SE is $41.68\%p$ and this bound is much wider than the bound for SE of $A=\text{Race}$. From this confidence interval, it is hard to argue if there exists any discrimination, let alone the magnitude of direction of it.

\begin{figure}[ht]
    \centering
    \includegraphics[width=5cm]{img/compas/Age-hist-Ctf-DE.png}
    \includegraphics[width=5cm]{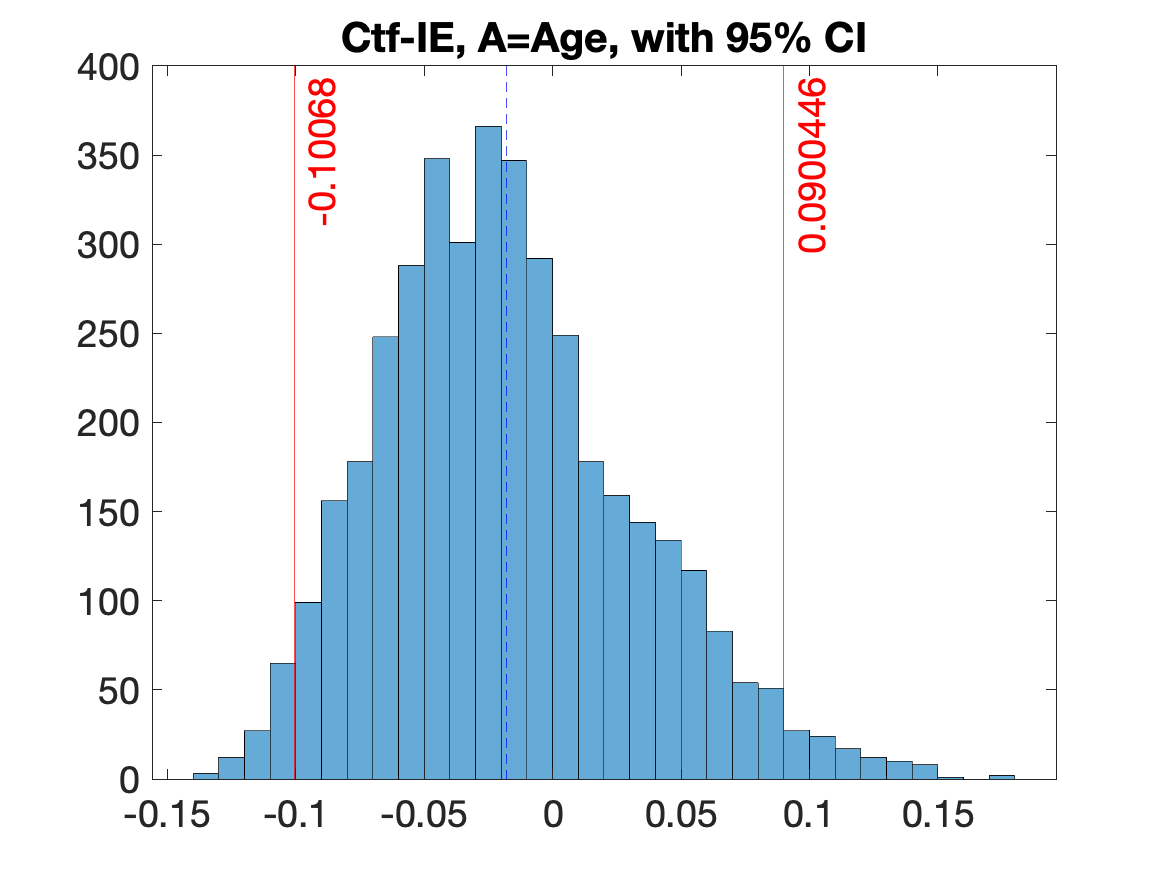}
    \includegraphics[width=5cm]{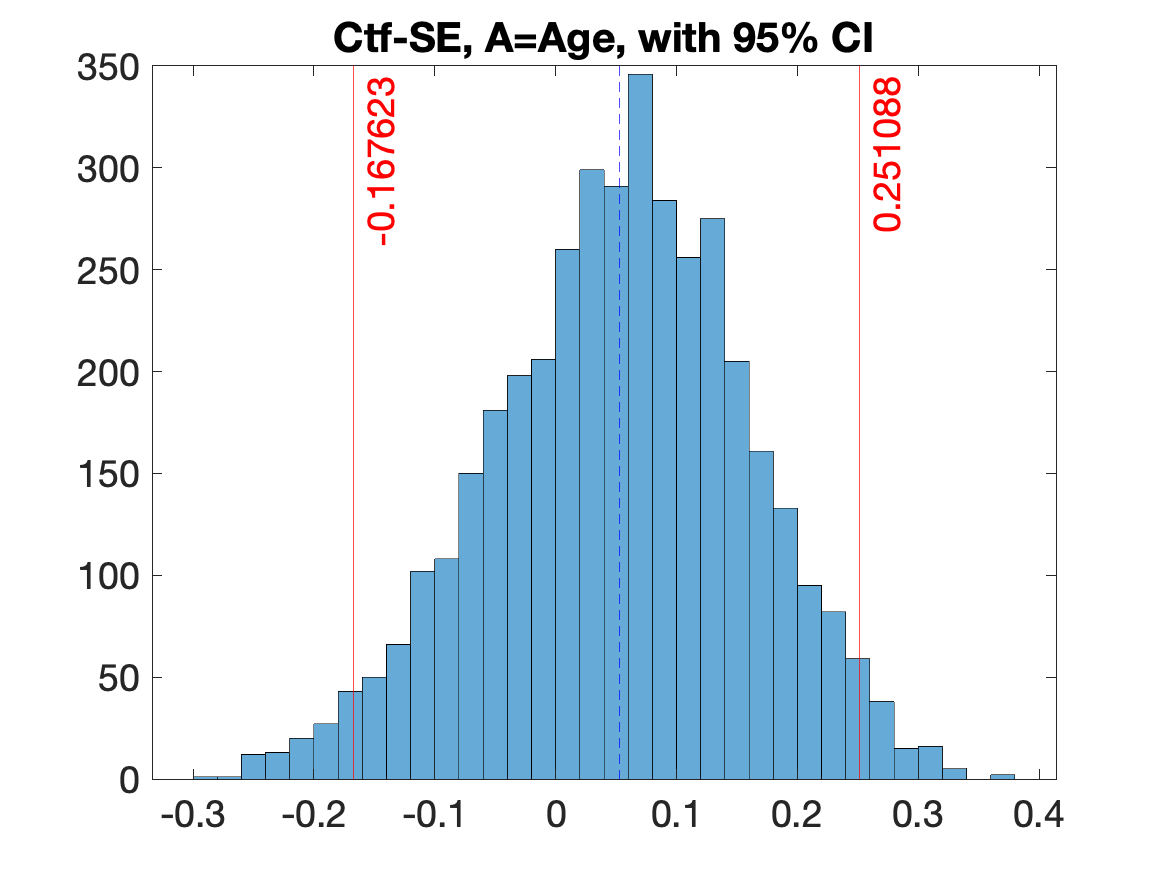}
    \caption{DE (top left), IE (top right), and SE (bottom) of $A=\mbox{Age}$. The blue line is the mean and the two red lines show 95\% confidence interval (2.5\% top, 2.5\% bottom).}
    \label{fig.age.hist}
\end{figure}

\subsubsection{$A=\mbox{Sex}$}

\Cref{fig.sex.converge} shows the largest variance among all three $A$'s we tested. The blue line ranges mostly between $0.3$ and $0.36$, showing a higher variance than the two previous cases. The red line ($0.3146$) lies within that bound, but it is much closer to $0.3$ than $0.36$. This may imply that our \Cref{alg.gibbs} found it harder to converge in the case of $A=\text{Sex}$.

\begin{figure}[ht]
    \centering
    \includegraphics[width=7cm]{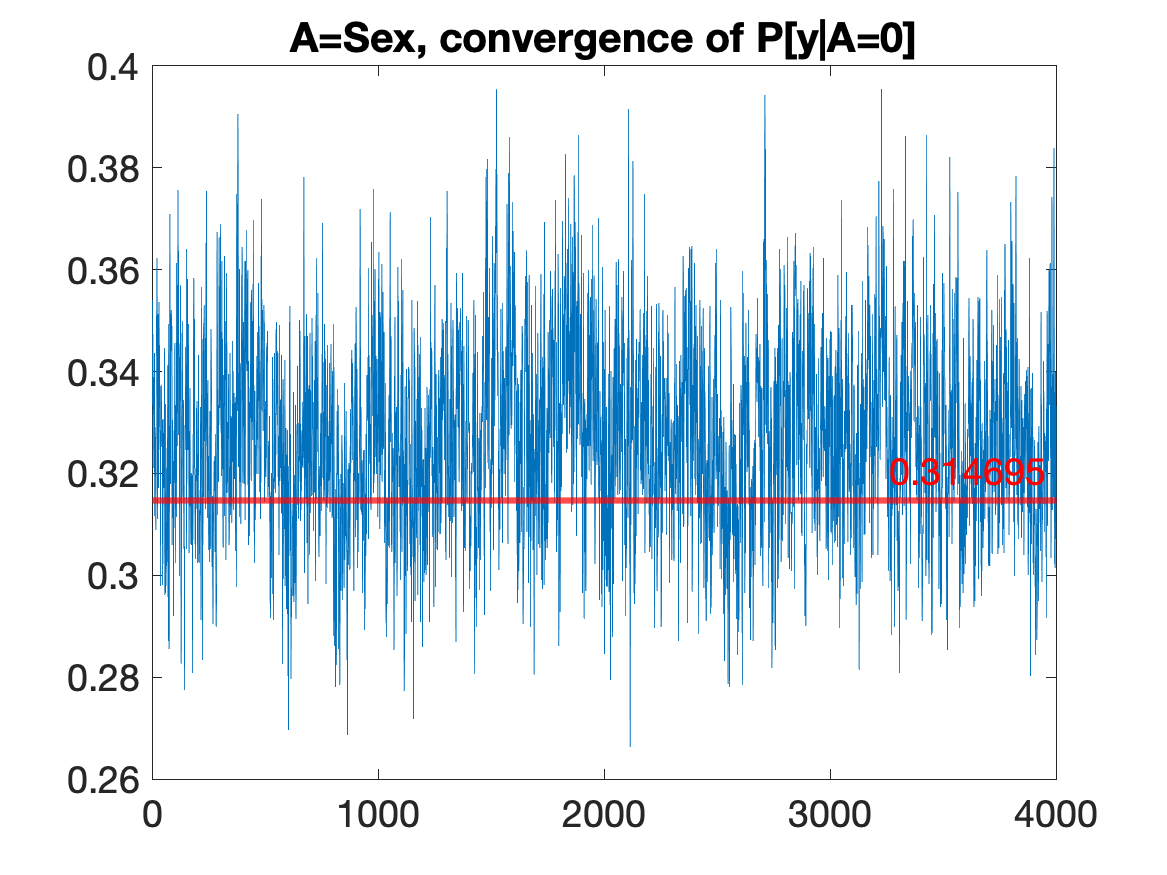}
    \caption{The convergence of $P[Y_{A=0}=1|A=0]$ computed from the samples. The X axis is time (from first to $4000$-th sample) and the horizontal red line indicates $P[Y=1|A=0]$ computed from the data.}
    \label{fig.sex.converge}
\end{figure}

\Cref{fig.sex.hist} shows the distribution of IE (left) and SE (right). Again, the confidence interval of IE is $23.1\%p$ (CI=$(-0.0581,0.1738)$), while that of SE is $28.05\%p$ (CI=$(-0.1284,0.1521)$).
Although it is difficult to decide which direction the spurious effect is heading to, IE seems to be marginally negative or positive.
These confidence intervals are much wider than the previous cases, which is already alluded by the convergence in \Cref{fig.sex.converge}.

\begin{figure}[ht]
    \centering
    \includegraphics[width=6.5cm]{img/compas/Sex-hist-Ctf-IE.png}
    \includegraphics[width=6.5cm]{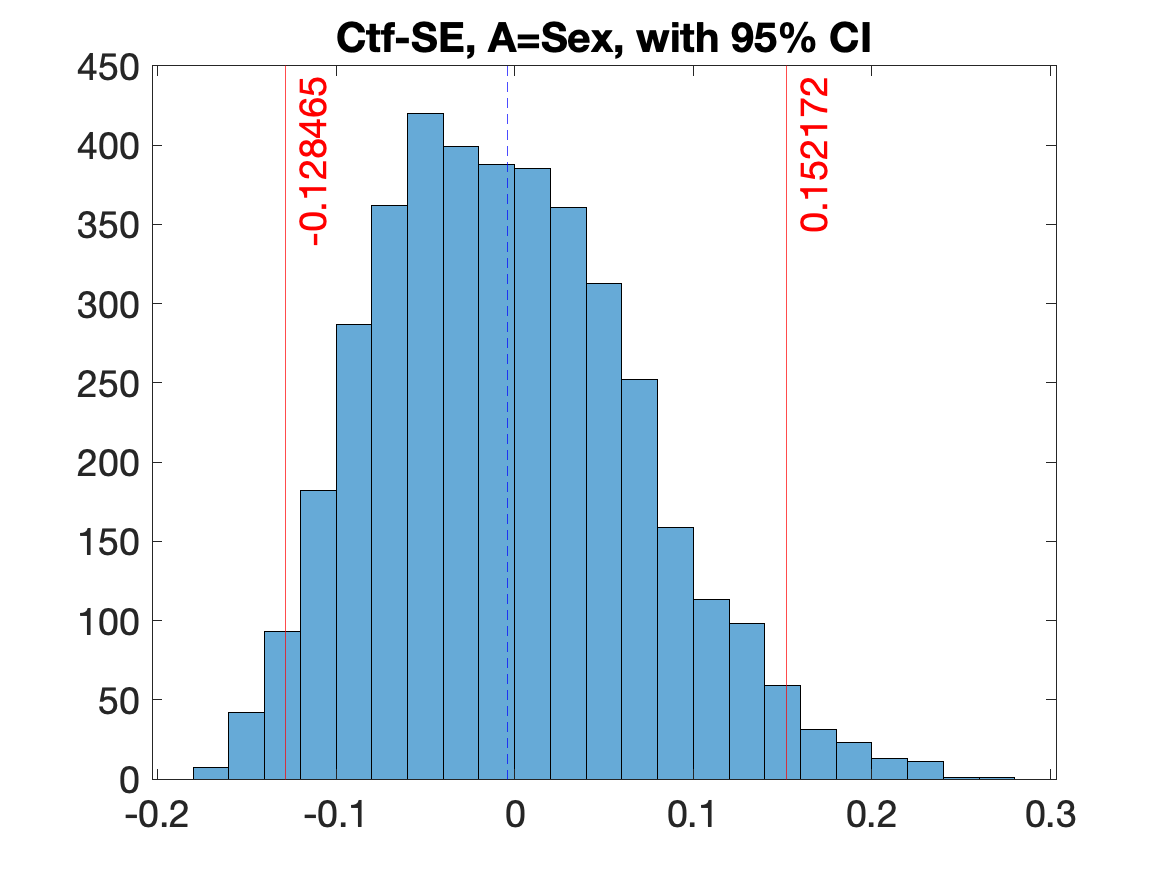}
    \caption{IE (left) and SE (right) of $A=\mbox{Sex}$. The blue line is the mean and the two red lines show 95\% confidence interval (2.5\% top, 2.5\% bottom).}
    \label{fig.sex.hist}
\end{figure}

\newpage

\subsection{From Section \ref{sec:comparison}. Comparison to Other Fairness Measures}\label{app.othermeasure}
In this section, we show more visualizations of the results from bounding other fairness measures.
All results are already presented and discussed in Section \ref{sec:comparison}.

\begin{figure}[ht]
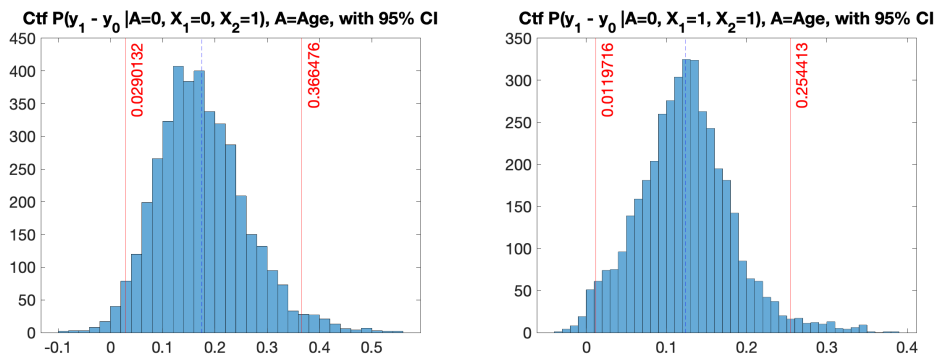

    \centering
    \includegraphics[width=6.5cm]{img_ctf/Age-hist-effect_a0_x01.png}
    \includegraphics[width=6.5cm]{img_ctf/Age-hist-effect_a0_x11.png}
    \caption{CE with $A=\mbox{Age}$ conditioned on $A=0, W_1=0, W_2=1$ (left) and $A=0, W_1=1, W_2=1$ (right). The two vertical red lines show 95\% confidence interval (2.5\% top, 2.5\% bottom).}
    \label{fig.age.ce}
\end{figure}

\begin{figure}[ht]
    \centering
    \includegraphics[width=6.5cm]{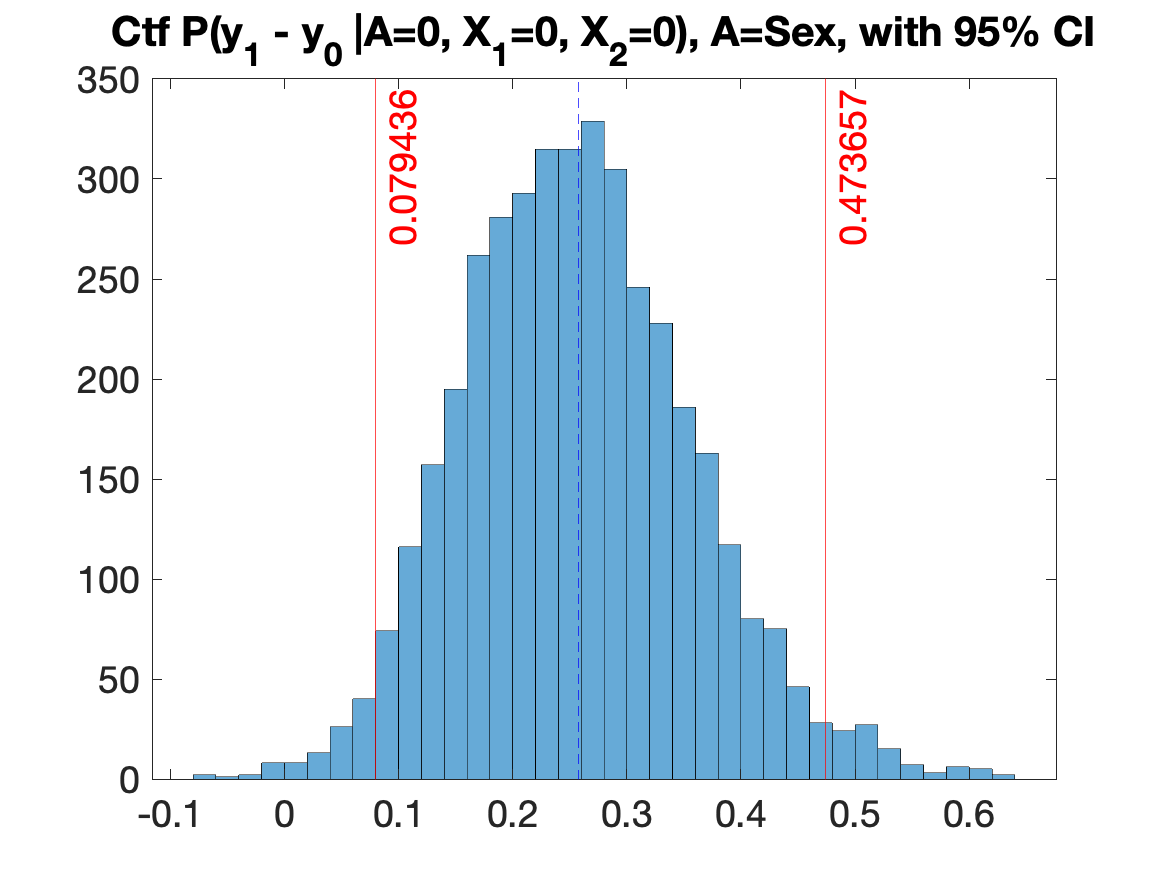}
    \includegraphics[width=6.5cm]{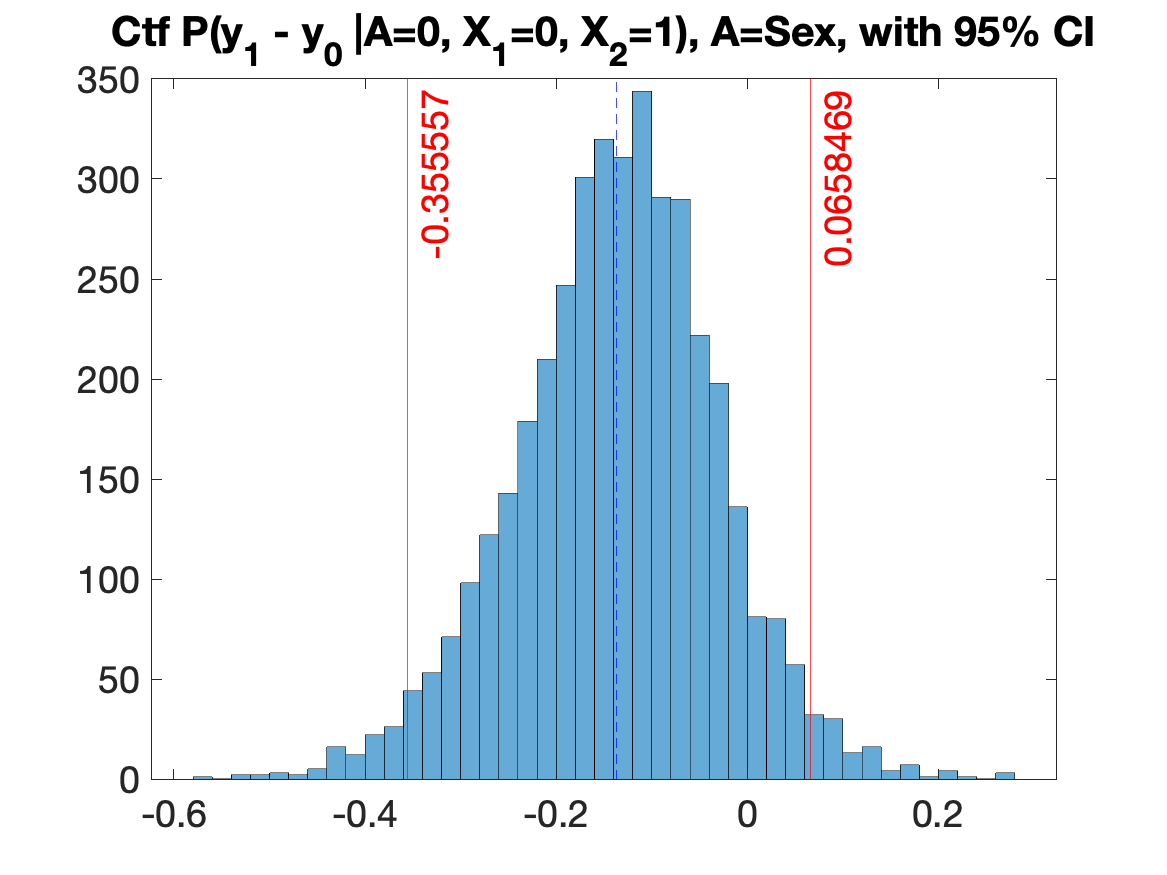}
    \includegraphics[width=6.5cm]{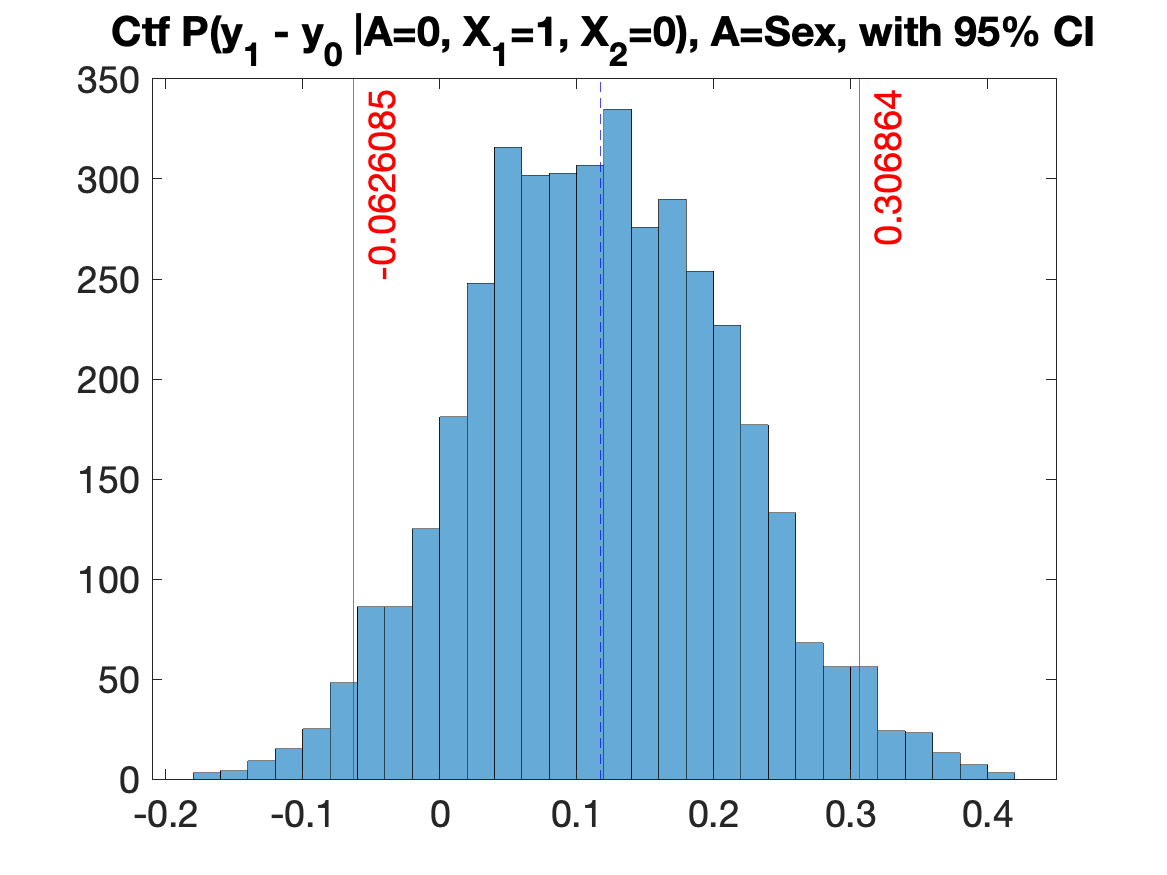}
    \includegraphics[width=6.5cm]{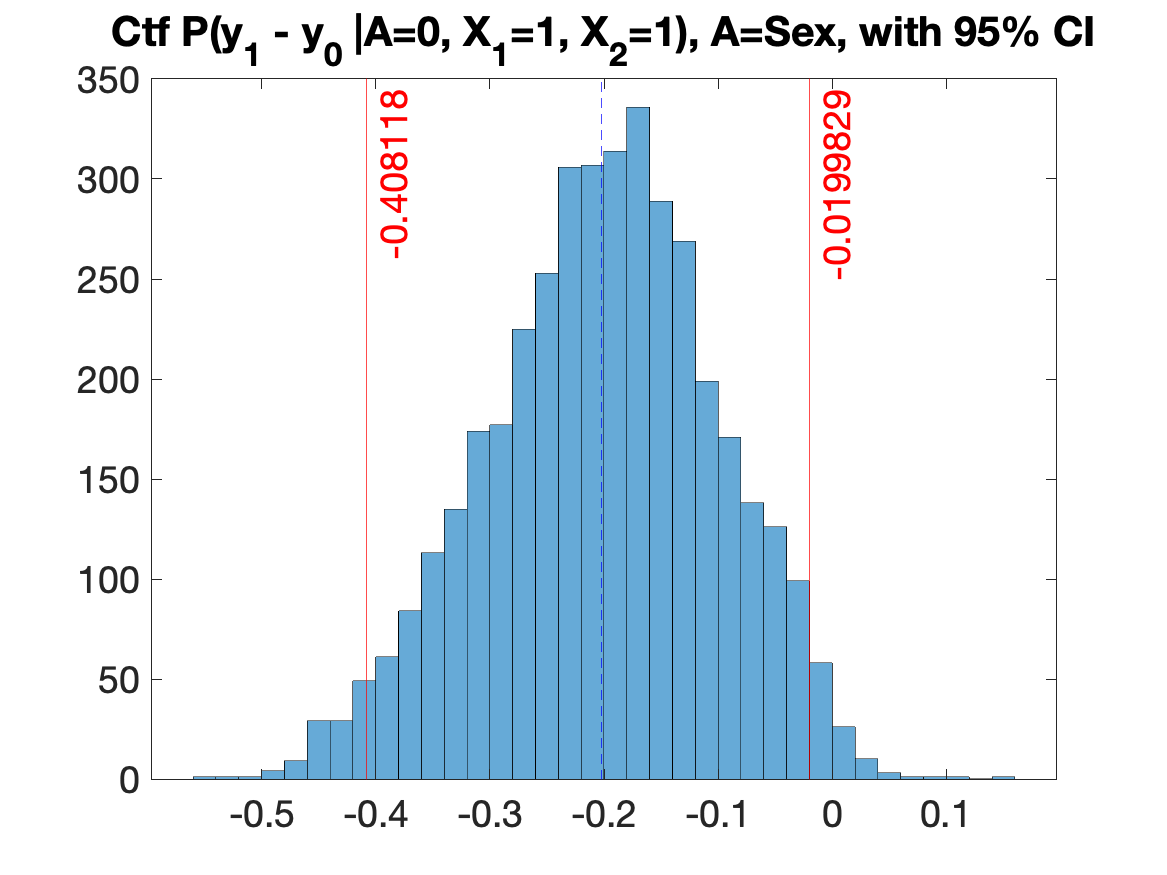}
    \caption{CE with $A=\mbox{Sex}$ conditioned on $A=0$ and different values of $W_1$ and $W_2$}
    \label{fig.sex.ctf}
\end{figure}

\begin{figure}[ht]
    \centering
    \includegraphics[width=6.5cm]{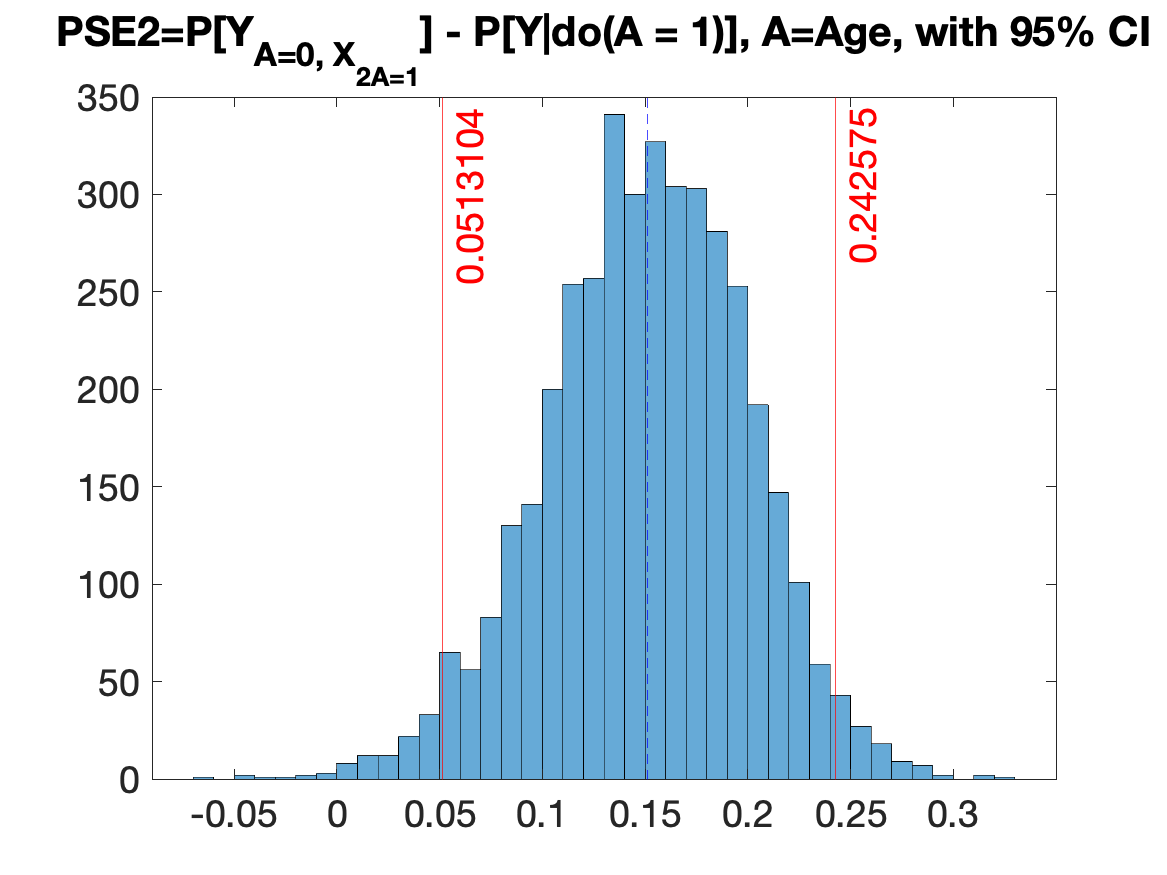}
    \includegraphics[width=6.5cm]{img_ctf/Age-PSE-PSE1.png}
    \caption{PSE of $A=\mbox{Age}$ for the path $A \to Y$ (left) and the path $A \to W_2 \to Y$ (right).}
    \label{fig.age.pse}
\end{figure}

\newpage

\section{Frequently Asked Questions}

In this section, we provide answers to basic questions for the readers not too familiar with causal inference.

\paragraph{Q. What does it mean for counterfactual probabilities to be unidentifiable, and how prevalent is the case?}

A. An unidentifiable problem means that the target quantity is underdefined from the available observed data and assumptions on the causal strctural model. Usually, counterfactual probabilities cannot be specified even if we know the true causal diagram with the data \cite{pearl:2k}. More detailed explanations can be found in Chapters 1 and 7 of \cite{pearl:2k}.

\paragraph{Q. Can we use this algorithm without knowledge of the true data-generating process?}

A. Yes. Our Algorithms only require that the observed variables are discrete and finite, and the equivalence class is obtained by FCI. Section \ref{sec:simulation} uses a simulated dataset generated from a known data-generating process to test the accuracy of our method. However, in real-world cases, as demonstrated in Section \ref{sec:compas}, inference will be based solely on the observed data and detailed knowledge of the true data-generating process is not required.

\paragraph{Q. Then, are we assuming that the cardinality of the variables are bounded?}

A. No. The bounded cardinality is not an assumption, but a sufficient statistic that allows us to represent the observational distribution and the target counterfactual measurement in the ground-truth causal model. This result was first introduced in \cite{zhang2022partial}, and Theorem \ref{thm:bound} extends it to nested counterfactual fairness measures, while improving the latent cardinality. In essence, Theorem \ref{thm:bound} shows that there exists a natural parametrization for the latent space as a function of the cardinality of the observation distribution, which helps a user to make a data-driven decision.

\paragraph{Q. What if there are multiple candidates for $\mathcal{G}$ in $\mathcal{E}^*$?}

A. Then, the probability mass is equally distributed across all candidates. Its effect on the final bound will depend on the causal diagrams and the data, but the width of the bound is expected to increase when more options are considered (models with fewer assumptions).

\end{document}